\newcommand{\B}{\mathbf{B}}
\newcommand{\M}{\mathbf{M}}
\renewcommand{\L}{\mathcal{L}}
\renewcommand{\C}{\mathbf{C}}
\newcommand{\Sig}{\mathbf{\Sigma}}
\newcommand{\dx}{d_{\bm{\mathrm{x}}}}
\newcommand{\ds}{d_{\bm{\mathrm{s}}}}
\newtheorem{gangtheorem}{Theorem}
\newtheorem{ganglemma}[gangtheorem]{Lemma}
\title{Non-Gaussian Component Analysis\\ with Log-Density
Gradient Estimation}
\author{Hiroaki Sasaki \\ hsasaki@is.naist.jp \\ Graduate School of
Information Science, \\ Nara Institute of Science \& Technology, Nara,
Japan \\\vspace{-1mm} \\
Gang Niu \\ 
gang@ms.k.u-tokyo.ac.jp \\
Graduate School of Frontier Sciences, \\ The University of Tokyo,
Chiba, Japan
\\\vspace{-1mm} \\
Masashi Sugiyama \\ 
sugi@k.u-tokyo.ac.jp \\
Graduate School of Frontier Sciences, \\ The University of Tokyo,
Chiba, Japan
}
\date{}
\begin{document}
\maketitle
\begin{abstract}
 Non-Gaussian component analysis (NGCA) is aimed at identifying a linear
 subspace such that the projected data follows a non-Gaussian
 distribution. In this paper, we propose a novel NGCA algorithm based on
 log-density gradient estimation. Unlike existing methods, the proposed
 NGCA algorithm identifies the linear subspace by using the eigenvalue
 decomposition without any iterative procedures, and thus is
 computationally reasonable. Furthermore, through theoretical analysis,
 we prove that the identified subspace converges to the true subspace at
 the optimal parametric rate. Finally, the practical performance of the
 proposed algorithm is demonstrated on both artificial and benchmark
 datasets.
\end{abstract}
 \section{Introduction}
 \label{sec:intro}
 A popular way to alleviate difficulties of handling high-dimensional
 data is to reduce the dimensionality of data. Real-world applications
 imply that a small number of non-Gaussian signal components in data
 often include ``interesting'' information, while the remaining Gaussian
 components are ``uninteresting''~\citep{blanchard2006search}. This is
 the fundamental motivation of non-Gaussian-based unsupervised dimension
 reduction methods.
 
 A well-known method is projection pursuit (PP), which estimates
 directions on which the projected data is as non-Gaussian as
 possible~\citep{friedman1974projection,huber1985projection}. In
 practice, PP algorithms maximize a \emph{single} index function
 measuring non-Gaussianity of the data projected on a
 direction. However, some index functions are suitable for measuring
 super-Gaussianity, while others are good at measuring
 sub-Gaussianity~\citep{hyvarinen2001independentBOOK}. Thus, PP
 algorithms might not work well when super- and sub-Gaussian signal
 components are mixtured in data.
 
 Non-Gaussian component analysis (NGCA)~\citep{blanchard2006search}
 copes with this problem. NGCA is a semi-parametric framework for
 unsupervised linear dimension reduction, and aimed at identifying a
 subspace such that the projected data follows a non-Gaussian
 distribution. Compared with independent component analysis
 (ICA)~\citep{comon1994independent,hyvarinen2001independentBOOK}, NGCA
 stands on a more general setting: There is no restriction about the
 number of Gaussian components and non-Gaussian signal components can be
 dependent of each other, while ICA makes a stronger assumption that
 \emph{at most} one Gaussian component is allowed and all the signal
 components are statistically independent of each other.
  
 To take into account both super- and sub-Gaussian components, the first
 practical NGCA algorithm called the \emph{multi-index projection
 pursuit} (MIPP) heuristically makes use of multiple index functions in
 PP~\citep{blanchard2006search}, but it seems to be unclear whether this
 heuristic works well in general. To improve the performance of MIPP,
 \emph{iterative metric adaptation for radial kernel functions} (IMAK)
 has been proposed~\citep{kawanabe2007new}. IMAK does not rely on index
 functions, but instead estimates alternative functions from
 data. However, IMAK involves an iterative optimization procedure, and
 its computational cost is expensive.
 
 In this paper, based on log-density gradient estimation, we propose a
 novel NGCA algorithm which we call the {least-squares NGCA}
 (LSNGCA). The rationale in LSNGCA is that as we show later, the target
 subspace contains the log-gradient for the data density subtracted by
 the log-gradient for a Gaussian density. Thus, the subspace can be
 identified using the eigenvalue decomposition. Unlike MIPP and IMAK,
 LSNGCA neither requires index functions nor any iterative procedures,
 and thus is computationally reasonable.
 
 A technical challenge in LSNGCA is to accurately estimate the gradient
 of the log-density for data. To overcome it, we employ a direct
 estimator called the \emph{least squares log-density gradients}
 (LSLDG)~\citep{cox1985penalty,SasakiHS14clustering}. LSLDG accurately
 and efficiently estimates log-density gradients in a closed form
 without going through density estimation. In addition, it includes an
 automatic parameter tuning method. In this paper, based on LSLDG, we
 theoretically prove that the subspace identified by LSNGCA converges to
 the true subspace at the optimal parametric rate, and finally
 demonstrate that LSNGCA reasonably works well on both artificial and
 benchmark datasets.
 
 This paper is organized as follows: In Section~\ref{sec:review}, after
 stating the problem of NGCA, we review MIPP and IMAK, and discuss their
 drawbacks. We propose LSNGCA, and then overview LSLDG in
 Section~\ref{sec:LSNGCA}. Section~\ref{sec:theory} performs a
 theoretical analysis of LSNGCA. The performance of LSNGCA on artificial
 datasets is illustrated in Sections~\ref{sec:illust}. Application to
 binary classification on benchmark datasets is given in
 Section~\ref{sec:application}. Section~\ref{sec:conc} concludes this
 paper.
 \section{Review of Existing Algorithms}
 \label{sec:review}
 In this section, we first describe the problem of NGCA, and then review
 existing NGCA algorithms.
 \subsection{Problem Setting}
 \label{ssec:problem}
 Suppose that a number of samples
 $\mathcal{X}=\{\vector{x}_i=(x_i^{(1)},x_i^{(2)},\dots,x_i^{(\dx)})^{\top}\}_{i=1}^n$
 are generated according to the following model:
 \begin{align}
  \vector{x}=\A\vector{s}+\vector{n}, \label{model}
 \end{align}
 where $\vector{s}=(s^{(1)},s^{(2)},\dots,s^{(\ds)})^{\top}$ denotes a
 random signal vector, $\A$ is a $\dx$-by-$\ds$ matrix, $\vector{n}$ is
 a Gaussian noise vector with the mean vector $\vector{0}$ and
 covariance matrix $\mathbf{C}$. Assume further that the dimensionality
 of $\vector{s}$ is lower than that of $\vector{x}$, namely $\ds<\dx$,
 and $\vector{s}$ and $\vector{n}$ are statistically independent of each
 other.
 
 Lemma~1 in~\citet{blanchard2006search} states that when data samples
 follow the generative model \eqref{model}, the probability density
 $p(\vector{x})$ can be described as a semi-parametric model:
 \begin{align}
  p(\vector{x})=f_{\bm{\mathrm{x}}}(\B^{\top}\vector{x})\phi_{\mathbf{C}}(\vector{x}),
  \label{density}
 \end{align}
 where $\B$ is a $\dx$-by-$\ds$ matrix, $f_{\bm{\mathrm{x}}}$ is a
 positive function and $\phi_{\mathbf{C}}$ denotes the Gaussian density
 with the mean $\vector{0}$ and covariance matrix $\mathbf{C}$.
 
 The decomposition in \eqref{density} is not unique because
 $f_{\bm{\mathrm{x}}}$, $\B$ and $\mathbf{C}$ are not identifiable from
 $p$. However, as shown in~\citet{theis2006uniqueness}, the following
 linear $\ds$-dimensional subspace is identifiable:
 \begin{align}
  \mathcal{L}=\text{Ker}(\B^{\top})^{\perp}=\text{Range}(\B).
  \label{space}
 \end{align}
 $\mathcal{L}$ is called the \emph{non-Gaussian index space}. Here, the
 problem is to identify $\L$ from $\mathcal{X}$. In this paper, we
 assume that $\ds$ is known.
 \subsection{Multi-Index Projection Pursuit}
 \label{ssec:MIPP}
 The first algorithm of NGCA called the \emph{multi-index projection
 pursuit} (MIPP) was proposed based on the following key
 result~\citep{blanchard2006search}:
 \begin{Prop}
  \label{key} Let $\vector{x}$ be a random variable whose density
  $p(\vector{x})$ has the semi-parametric form \eqref{density}, and
  suppose that $h(\vector{x})$ is a smooth real function on
  $\R{\dx}$. Denoting by $\I_{\dx}$ the $\dx$-by-$\dx$ identity matrix,
  assume further that $E\{\vector{x}\}=\vector{0}$ and
  $E\{\vector{x}\vector{x}^{\top}\}=\I_{\dx}$. Then, under mild
  regularity conditions on $h$, the following $\vector{\beta}(h)$
  belongs to the target space $\L$:
  \begin{align*}
   \vector{\beta}(h)=E\{\vector{x}h(\vector{x})
   -\nabla_{\vector{x}}h(\vector{x})\},
  \end{align*}
  where $\nabla_{\vector{x}}$ is the differential operator with respect
  to $\vector{x}$.
 \end{Prop}
 The condition that $E\{\vector{x}\vector{x}^{\top}\}=\I_{\dx}$ seems to
 be strong, but in practice it can be satisfied by whitening data. Based
 on Proposition~\ref{key}, after whitening data samples as
 $\vector{y}_i=\widehat{\Sig}^{-1/2}\vector{x}_i$ where
 $\widehat{\Sig}=\frac{1}{n}\sum_{i=1}^n\vector{x}_i\vector{x}_i^{\top}$,
 for a bunch of functions $\left\{h_k\right\}_{k=1}^K$, MIPP estimates
 $\vector{\beta}(h_k)=\vector{\beta}_k$ as
 \begin{align}
  \widehat{\vector{\beta}}_k
  =\frac{1}{n}\sum_{i=1}^n\vector{y}_ih_k(\vector{y}_i)
  -\nabla_{\vector{y}}h_k(\vector{y}_i). \label{empbeta}
 \end{align}
 Then, MIPP applies PCA to $\{\widehat{\vector{\beta}}_k\}_{k=1}^K$ and
 estimates $\L$ by pulling back the $\ds$-dimensional space spanned by
 the first $\ds$ principal directions into the original (non-whitened)
 space.
   
 Although the basic procedure of MIPP is simple, there are two
 implementation issues: normalization of $\widehat{\vector{\beta}}_k$
 and choice of functions $h_k$.  The normalization issue comes from the
 fact that since \eqref{empbeta} is a linear mapping,
 $\widehat{\vector{\beta}}_k$ with larger norm can be dominant in the
 PCA step, and they are not necessarily informative in practice.  To
 cope with this problem, \citet{blanchard2006search} proposed the
 following normalization scheme:
 \begin{align}
  \frac{\widehat{\vector{\beta}}_k}{\sqrt{\sum_{i=1}^n \|\vector{y}_i
  h_k(\vector{y}_i)-\nabla_{\vector{y}} h_k(\vector{y}_i)\|^2
  -\|\widehat{\vector{\beta}}_k\|^2}}. \label{normalization}
 \end{align}
 After normalization, since the squared norm of each vector is
 proportional to its signal-to-noise ratio, longer vectors are more
 informative.

 MIPP is supported by theoretical
 analysis~\citep[Theorem~3]{blanchard2006search}, but the practical
 performance strongly depends on the choice of $h$. To find an
 informative $h$, the form of $h$ was restricted as
 \begin{align*}
  h_{f,\vector{\omega}}(\vector{y})=r(\vector{\omega}^{\top}\vector{y}),
 \end{align*}
 where $\vector{\omega}\in\R{\dx}$ denotes a unit-norm vector, and $r$
 is a function. As a heuristic, the FastICA
 algorithm~\citep{hyvarinen1999fast} was employed to find a good
 $\vector{\omega}$. Although MIPP was numerically demonstrated to
 outperform PP algorithms, it is unclear whether these heuristic
 restriction and preprocessing work well in general.
 \subsection{Iterative Metric Adaptation for Radial Kernel Functions}
 \label{ssec:IMAK}
 To improve the performance of MIPP, the \emph{iterative metric
 adaptation for radial kernel functions} (IMAK) estimates $h$ by
 directly maximizing the informative normalization criterion, which is
 the squared norm of \eqref{normalization} used for normalization in
 MIPP~\citep{kawanabe2007new}. To estimate $h$, a linear-in-parameter
 model is used as
 \begin{align*}
  h_{\sigma^2,\M,\vector{\alpha}}(\vector{y})&=\sum_{i=1}^n \alpha_i
  \exp\left\{-\frac{1}{2\sigma^2}
  (\vector{y}-\vector{y}_i)^{\top}\M(\vector{y}-\vector{y}_i)\right\}\\
  &=\sum_{i=1}^n \alpha_i k_{\sigma^2,\M}(\vector{y},\vector{y}_i),
 \end{align*}
 where $\vector{\alpha}=(\alpha_1,\dots,\alpha_n)$ is a vector of
 parameters to be estimated, $\M$ is a positive semidefinite matrix and
 $\sigma$ is a fixed scale parameter. This model allows us to represent
 the squared norm of the informative criterion \eqref{normalization} as
 \begin{align}
  \frac{\|\widehat{\vector{\beta}}_k\|^2}{\sum_{i=1}^n \|\vector{y}_i
  h_k(\vector{y}_i)-\nabla_{\vector{y}} h_k(\vector{y}_i)\|^2
  -\|\widehat{\vector{\beta}}_k\|^2}=
  \frac{\vector{\alpha}^{\top}\mathbf{F}\vector{\alpha}}
  {\vector{\alpha}^{\top}\mathbf{G}\vector{\alpha}}.
  \label{IMAKcriterion}
 \end{align} 
 $\mathbf{F}$ and $\mathbf{G}$ in \eqref{IMAKcriterion} are given by
 \begin{align*}
  a  &\mathbf{F}=\frac{1}{n^2}\sum_{r=1}^{\dx}
  \left(\vector{e}_r^{\top}\mathbf{Y}\mathbf{K}
  -\vector{1}_n^{\top}\partial_{r}\mathbf{K}\right)^{\top}
  \left(\vector{e}_r^{\top}\mathbf{Y}\mathbf{K}
  -\vector{1}_n^{\top}\partial_{r}\mathbf{K}\right)\\
  &\mathbf{G}+\mathbf{F}\\ &=\frac{1}{n}\sum_{r=1}^{\dx}
  \left\{\text{diag}(\vector{e}_r^{\top}\mathbf{Y})\mathbf{K}
  -\partial_{r}\mathbf{K}\right\}^{\top}
  \left\{\text{diag}(\vector{e}_r^{\top}\mathbf{Y})\mathbf{K}
  -\partial_{r}\mathbf{K}\right\},
 \end{align*}
 where $\vector{e}_r$ denotes the $r$-th basis vector in $\R{\dx}$,
 $\mathbf{Y}$ is a $\dx$-by-$n$ matrix whose column vectors are
 $\vector{y}_i$, $\mathbf{K}$ is the Gram matrix whose $(i,j)$-th
 element is
 $[\mathbf{K}]_{ij}=k_{\sigma^2,\M}\left(\vector{y}_i,\vector{y}_j\right)$,
 $\partial_{r}$ denotes the partial derivative with respect to the
 $r$-th coordinate in $\vector{y}$, and
 \begin{align*}
  [\partial_{r}\mathbf{K}]_{ij}&=\frac{1}{\sigma^2}
  \left([\M\vector{y}_i]_r-[\M\vector{y}_j]_r\right)\\ &\times
  k^{\prime}_{\sigma^2,\M}\left(-\frac{1}{2\sigma^2}
  (\vector{y}_i-\vector{y}_j)^{\top}\M(\vector{y}_i-\vector{y}_j)\right).
 \end{align*}
 
 The maximizer of \eqref{IMAKcriterion} can be obtained by solving the
 following generalized eigenvalue problem:
 \begin{align*}
  \mathbf{F}\vector{\alpha}&=\eta\mathbf{G}\vector{\alpha},
 \end{align*} 
 where $\eta$ is the generalized eigenvalue. Once $\vector{\alpha}$ is
 estimated, $\vector{\beta}$ can be also estimated according to
 \eqref{empbeta}. Then, the metric $\M$ in $h$ is updated as
 \begin{align*}
  \M \propto \sum_{k}
  \widehat{\vector{\beta}}_k\widehat{\vector{\beta}}^{\top}_k,
 \end{align*}
 where $\M$ is scaled so that its trace equals to $\dx$. IMAK
 alternately and iteratively updates $\vector{\alpha}$ and
 $\vector{\beta}$. It was experimentally shown that IMAK improves the
 performance of MIPP. However, IMAK makes use of the above alternate and
 iterative procedure to estimate a number of functions
 $h_{\sigma^2,\M,\vector{\alpha}}$ with different parameter values for
 $\sigma$. Thus, IMAK is computationally costly.
 \section{Least-Squares Non-Gaussian Component Analysis (LSNGCA)}
 \label{sec:LSNGCA}
 In this section, we propose a novel algorithm for NGCA, which is based
 on the gradients of log-densities. Then, we overview an existing useful
 estimator for log-density gradients.
 \subsection{A Log-Density-Gradient-Based Algorithm for NGCA}
 \label{ssec:newalg}
 In contrast to MIPP and IMAK, our algorithm does not rely on
 Proposition~1, but is derived more directly from the semi-parametric
 model \eqref{density}. As stated before, the noise covariance matrix
 $\C$ in \eqref{density} cannot be identified in general. However, after
 whitening data, the semi-parametric model \eqref{density} is
 significantly simplified by following the proof of Lemma~3
 in~\citet{sugiyama2008approximating} as
 \begin{align}
  p(\vector{y})&= f_{\bm{\mathrm{y}}}(\B^{\prime\top}\vector{y})
  \phi_{\I_{\dx}}(\vector{y}), \label{wdensity}
 \end{align}
 where $\B^{\prime}$ is a $\dx$-by-$\ds$ matrix such that
 $\B^{\prime\top}\B^{\prime}=\I_{\ds}$,
 $\vector{y}=\Sig^{-1/2}\vector{x}$, $f_{\bm{\mathrm{y}}}$ is a positive
 function and $\Sig=E\{\vector{x}\vector{x}^{\top}\}$. Thus, under
 \eqref{wdensity}, the non-Gaussian index subspace can be represented as
 $\L=\text{Range}(\B)=\Sig^{-1/2}\text{Range}(\B^{\prime})$.
 
 To estimate $\text{Range}(\B^{\prime})$, we take a novel approach based
 on the gradients of log-densities. The reason of using the gradients
 comes from the following equation, which can be easily derived by
 computing the gradient of the both-hand sides of \eqref{wdensity} after
 taking the logarithm:
 \begin{align}
  \nabla_{\vector{y}}[\log p(\vector{y})-\log
  \phi_{\I_{\dx}}(\vector{y})]&= \B^{\prime} \nabla_{\vector{z}} \log
  f_{\bm{\mathrm{y}}}(\vector{z}=\B^{\prime\top}\vector{y}).
  \label{gradient}
 \end{align}
 Eq.\eqref{gradient} indicates that $\nabla_{\vector{y}}[\log
 p(\vector{y}) -\log\phi_{\I_{\dx}}(\vector{y})]=\nabla_{\vector{y}}\log
 p(\vector{y})+\vector{y}$ is contained in
 $\text{Range}(\B^{\prime})$. Thus, an orthonormal basis
 $\{\vector{e}_i\}_{i=1}^{\ds}$ in $\text{Range}(\B^{\prime})$ is
 estimated as the minimizer of the following PCA-like problem:
 \begin{align}
  E\{\|\vector{\nu}
  -\sum_{i=1}^{\ds}(\vector{\nu}^{\top}\vector{e}_i)\vector{e}_i\|^2\}
  &=E\{\|\vector{\nu}\|^2\} -\sum_{i=1}^{\ds}\vector{e}_i^{\top}
  E\{\vector{\nu}\vector{\nu}^{\top}\}\vector{e}_i,\label{LS}
 \end{align}
 where $\vector{\nu}=\nabla_{\vector{y}}\log p(\vector{y})+\vector{y}$,
 $\|\vector{e}_i\|=1$ and $\vector{e}_i^{\top}\vector{e}_j=0$ for $i\neq
 j$.  Eq.\eqref{LS} indicates that minimizing the left-hand side with
 respect to $\vector{e}_i$ is equivalent to maximizing the second term
 in the right-hand side. Thus, an orthonormal basis
 $\{\vector{e}_i\}_{i=1}^{\ds}$ can be estimated by applying the
 eigenvalue decomposition to
 $E\{\vector{\nu}\vector{\nu}^{\top}\}=E\{(\nabla_{\vector{y}}\log
 p(\vector{y})+\vector{y})(\nabla_{\vector{y}}\log
 p(\vector{y})+\vector{y})^{\top}\}$.
 
 The proposed LSNGCA algorithm is summarized in~Fig.\ref{alg}. Compared
 with MIPP and IMAK, LSNGCA estimates $\L$ without specifying or
 estimating $h$ and any iteration procedures. The key challenge in
 LSNGCA is to estimate log-density gradients $\nabla_{\vector{y}}\log
 p(\vector{y})$ in Step~2. To overcome this challenge, we employ a
 method called the \emph{least-squares log-density gradients}
 (LSLDG)~\citep{cox1985penalty,SasakiHS14clustering}, which directly
 estimates log-density gradients without going through density
 estimation. As overviewed below, with LSLDG, LSNGCA can compute all the
 solutions in a closed form, and thus would be a computationally
 efficient algorithm. 
  \begin{figure}[t]
   \centering \noindent\fbox{
   \begin{minipage}{\dimexpr\linewidth-2\fboxsep-2\fboxrule\relax}
   \textbf{Input:} Data samples, $\{\vector{x}_i\}_{i=1}^n$.
    \begin{enumerate}[Step~1]
    \item Whiten $\vector{x}_i$ after subtracting the empirical mean
	  values from them.
	  
    \item Estimate the gradient of the log-density for the whitened data
	  $\vector{y}_i=\widehat{\Sig}^{-1/2}\vector{x}_i$.
	  
    \item Using the estimated gradients
	  $\widehat{\vector{g}}(\vector{y}_i)$, compute
	  $\widehat{\mathbf{\Gamma}} =\frac{1}{n}\sum_{i=1}^n
	  \{\widehat{\vector{g}}(\vector{y}_i)+\vector{y}_i\}
	  \{\widehat{\vector{g}}(\vector{y}_i)+\vector{y}_i\}^{\top}$.

    \item Perform the eigenvalue decomposition to
	  $\widehat{\mathbf{\Gamma}}$, and let $\widehat{\mathcal{I}}$
	  be the space spanned by the $\ds$ directions corresponding to
	  the largest $\ds$ eigenvalues.
    \end{enumerate}
    \textbf{Output:}
    $\widehat{\mathcal{L}}=\widehat{\Sig}^{-1/2}\widehat{\mathcal{I}}$.
   \end{minipage}
   }
   \caption{\label{alg} The LSNGCA algorithm.}
  \end{figure}   
  \subsection{Least-Squares Log-Density Gradients (LSLDG)}
  \label{ssec:LSLDG}
  The fundamental idea of LSLDG is to directly fit a gradient model
  $g^{(j)}(\vector{x})$ to the true log-density gradient under the
  squared-loss:
  \begin{align*}
   &J(g^{(j)})\\ &=\int \left\{g^{(j)}(\vector{x})-\partial_{j}\log
   p(\vector{x})\right\}^2p(\vector{x})\d\vector{x}-C^{(j)}\\ &=\int
   \left\{g^{(j)}(\vector{x})\right\}^2p(\vector{x})\d\vector{x} -2\int
   g^{(j)}(\vector{x})\partial_{j} p(\vector{x})\d\vector{x}\\ &=\int
   \left\{g^{(j)}(\vector{x})\right\}^2p(\vector{x})\d\vector{x} +2\int
   \left\{\partial_{j} g^{(j)}(\vector{x})\right\}
   p(\vector{x})\d\vector{x},
  \end{align*}
  $C^{(j)}=\int \left\{\partial_{j}\log
  p(\vector{x})\right\}^2p(\vector{x})\d\vector{x}$,
  $\partial_{j}=\parder{x^{(j)}}$ and the last equality comes from the
  \emph{integration by parts} under a mild assumption that
  $\lim_{|x^{(j)}|\rightarrow\infty}g^{(j)}(\vector{x})p(\vector{x})=0$. Thus,
  $J(g^{(j)})$ is empirically approximated as
  \begin{align}
   \tilde{J}(g^{(j)})&=\frac{1}{n}\sum_{i=1}^n g^{(j)}(\vector{x}_i)^2
   +2\partial_{j} g^{(j)}(\vector{x}_i). \label{empJ}
  \end{align}

  To estimate log-density gradients, we use a linear-in-parameter model
  as 
  \begin{align*}
   g^{(j)}(\vector{x})&=\sum_{i=1}^b \theta_{ij} \psi_{ij}(\vector{x})
   =\vector{\theta}_{j}^{\top}\vector{\psi}_{j}(\vector{x}),
  \end{align*}
  where $\theta_{ij}$ is a parameter to be estimated,
  $\psi_{ij}(\vector{x})$ is a fixed basis function, and $b$ denotes the
  number of basis functions and is fixed to $b=\min(n,100)$ in this
  paper. As in~\citet{SasakiHS14clustering}, the derivatives of Gaussian
  functions centered at $\vector{c}_i$ are used for
  $\psi_{ij}(\vector{x})$:
  \begin{align*}
   \psi_{ij}(\vector{x})=\frac{[\vector{c}_i-\vector{x}]^{(j)}}{\sigma_j^2}
   \exp\left(-\frac{\|\vector{x}-\vector{c}_i\|^2}{2\sigma_j^2}\right),
  \end{align*}
  where $[\vector{x}]^{(j)}$ denotes the $j$-th element in $\vector{x}$,
  $\sigma_j$ is the width parameter, and the center point $\vector{c}_i$
  is randomly selected from data samples $\vector{x}_i$. After
  substituting the linear-in-parameter model and adding the $\ell_2$
  regularizer into \eqref{empJ}, the solution is computed analytically:
  \begin{align*}
   \widehat{\vector{\theta}}_{j}&=\argmin_{\vector{\theta}_{j}}
   \left[\vector{\theta}_{j}^{\top}
   \widehat{\mathbf{G}}_{j}\vector{\theta}_{j}
   +2\vector{\theta}_{j}^{\top}\widehat{\vector{h}}_{j}
   +\lambda_{j}\vector{\theta}_{j}^{\top}\vector{\theta}_{j}\right]\\
   &=-(\widehat{\mathbf{G}}_{j}+\lambda_{j}\I_{b})^{-1}\widehat{\vector{h}}_{j},
  \end{align*}
  where $\lambda_{j}$ denotes the regularization parameter,
  \begin{align*}
   \widehat{\mathbf{G}}_{j}=\frac{1}{n}\sum_{i=1}^n
   \vector{\psi}_{j}(\vector{x}_i)\vector{\psi}_{j}(\vector{x}_i)^{\top}
   \ \text{and}\ \widehat{\vector{h}}_{j}=\frac{1}{n}\sum_{i=1}^n
   \partial_{j}\vector{\psi}_{j}(\vector{x}_i).
  \end{align*}
  Finally, the estimator is obtained as
  \begin{align*}
   \widehat{g}^{(j)}(\vector{x})&=\widehat{\vector{\theta}}_{j}^{\top}
   \vector{\psi}_{j}(\vector{x}).
  \end{align*}
  
  As overviewed, LSLDG does not perform density estimation, but directly
  estimates log-density gradients. The advantages of LSLDG can be
  summarized as follows:
  \begin{itemize}
   \item The solutions are efficiently computed in a closed form.

   \item All the parameters, $\sigma_{j}$ and $\lambda_{j}$, can be
	 automatically determined by cross-validation.
	 
   \item Experimental results confirmed that LSLDG provides much more
	 accurate estimates for log-density gradients than an estimator
	 based on kernel density estimation especially for
	 higher-dimensional data~\citep{SasakiHS14clustering}.
  \end{itemize}
 \section{Theoretical Analysis}
 \label{sec:theory}
 We investigate the convergence rate of LSNGCA in a parametric
 setting. Recall that
 \begin{align*}
  \widehat{\mathbf{G}}_j=\frac{1}{n}\sum_{i=1}^n
  \vector{\psi}_j(\vector{x}_i)\vector{\psi}_j(\vector{x}_i)^{\top},
  \quad \widehat{\vector{h}}_j=\frac{1}{n}\sum_{i=1}^n
  \partial_j\vector{\psi}_j(\vector{x}_i),
 \end{align*}
 and denote their expectations by
 \begin{align*}
  \mathbf{G}_j^*=\mathbb{E}\left[\vector{\psi}_j(\vector{x})
  \vector{\psi}_j(\vector{x})^{\top}\right], \quad
  \vector{h}_j^*=\mathbb{E}\left[\partial_j\vector{\psi}_j(\vector{x})\right].
 \end{align*}
 Subsequently, let
 \begin{align*}
  \vector{\theta}_j^* &= \argmin\nolimits_{\vector{\theta}}
  \left\{\vector{\theta}^\top\mathbf{G}_j^*\vector{\theta}
  +2\vector{\theta}^{\top}\vector{h}_j^*
  +\lambda_j^*\vector{\theta}^\top\vector{\theta}\right\},\\
  g^{*(j)}(\vector{x}) &= \vector{\theta}_j^{*\top}
  \vector{\psi}_j(\vector{x}),\\
  \mathbf{\Gamma}^* &= \mathbb{E}\left[(\vector{g}^*(\vector{y})+\vector{y})
  (\vector{g}^*(\vector{y})+\vector{y})^{\top}\right],
 \end{align*}
 let $\mathcal{I}^*$ be the eigen-space of $\mathbf{\Gamma}^*$ with its
 largest $\ds$ eigenvalues, and
 $\mathcal{L}^*=\vector{\Sigma}^{-1/2}\mathcal{I}^*$ be the optimal
 estimate.
 
 \begin{gangtheorem}
  \label{thm:main} Given the estimated space $\widehat{\mathcal{L}}$
  based on a set of data samples of size $n$ and the optimal space
  $\mathcal{L}^*$, denote by
  $\widehat{\mathbf{E}}\in\mathbb{R}^{\dx\times\ds}$ the matrix form of
  an arbitrary orthonormal basis of $\widehat{\mathcal{L}}$ and by
  $\mathbf{E}^*\in\mathbb{R}^{\dx\times\ds}$ that of
  $\mathcal{L}^*$. Define the distance between spaces
  $\widehat{\mathcal{L}}$ and $\mathcal{L}^*$ as
  \begin{equation*}
   \mathcal{D}(\widehat{\mathcal{L}},\mathcal{L}^*)
    =\inf\nolimits_{\widehat{\mathbf{E}},\mathbf{E}^*}
    \|\widehat{\mathbf{E}}-\mathbf{E}^*\|_\mathrm{Fro},
  \end{equation*}
  where $\|\cdot\|_\mathrm{Fro}$ stands for the Frobenius norm. Then, as
  $n\to\infty$,
  \begin{equation*}
   \mathcal{D}(\widehat{\mathcal{L}},\mathcal{L}^*)
    =\mathcal{O}_p\left(n^{-1/2}\right),
  \end{equation*}
  provided that
  \begin{enumerate}
   \vspace{-1ex}%
    \itemsep1pt \parskip0pt \parsep0pt%6 
	 
   \item $\lambda_j$ for all $j$ converge in $\mathcal{O}(n^{-1/2})$ to
	 the non-zero limits, i.e.,
	 $\lim_{n\to\infty}n^{1/2}|\lambda_j-\lambda_j^*|<\infty$, and there
	 exists $\epsilon_\lambda>0$ such that
	 $\lambda_j^*\ge\epsilon_\lambda$;
	 
   \item $\psi_{ij}(\vector{x})$ for all $i$ and $j$ have well-chosen
	 centers and widths, such that the first $\ds$ eigenvalues of
	 $\mathbf{\Gamma}^*$ are neither $0$ nor $+\infty$.
  \end{enumerate}
 \end{gangtheorem}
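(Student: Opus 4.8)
The plan is to track an $\mathcal{O}_p(n^{-1/2})$ error through the four steps of Fig.~\ref{alg} --- whitening, log-density-gradient estimation, formation of $\widehat{\mathbf{\Gamma}}$, and the eigendecomposition --- and to close with a matrix perturbation bound. First, every primitive empirical quantity is an average of i.i.d.\ terms with finite second moments under the stated regularity, so the central limit theorem gives $\|\widehat{\Sig}-\Sig\|_{\mathrm{op}}=\mathcal{O}_p(n^{-1/2})$, $\|\widehat{\mathbf{G}}_j-\mathbf{G}_j^*\|_{\mathrm{op}}=\mathcal{O}_p(n^{-1/2})$ and $\|\widehat{\vector{h}}_j-\vector{h}_j^*\|=\mathcal{O}_p(n^{-1/2})$ for every $j$; since $\Sig\succ0$ the maps $\mathbf{A}\mapsto\mathbf{A}^{-1/2}$ and $\mathbf{A}\mapsto\mathbf{A}^{-1}$ are smooth near $\Sig$, so $\|\widehat{\Sig}^{-1/2}-\Sig^{-1/2}\|_{\mathrm{op}}=\mathcal{O}_p(n^{-1/2})$ as well (delta method). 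For the ridge coefficients, write $\widehat{\vector{\theta}}_j-\vector{\theta}_j^*=-(\widehat{\mathbf{G}}_j+\lambda_j\I_b)^{-1}\widehat{\vector{h}}_j+(\mathbf{G}_j^*+\lambda_j^*\I_b)^{-1}\vector{h}_j^*$ and apply the resolvent identity $\mathbf{A}^{-1}-\mathbf{B}^{-1}=\mathbf{A}^{-1}(\mathbf{B}-\mathbf{A})\mathbf{B}^{-1}$. Assumption~1 forces $\mathbf{G}_j^*+\lambda_j^*\I_b\succeq\epsilon_\lambda\I_b$, so its inverse has operator norm at most $\epsilon_\lambda^{-1}$, and because $\widehat{\mathbf{G}}_j\to\mathbf{G}_j^*$ and $\lambda_j\to\lambda_j^*$ the perturbed matrix is, with probability tending to one, bounded below by $\epsilon_\lambda/2$; feeding in the $\mathcal{O}_p(n^{-1/2})$ bounds on $\widehat{\mathbf{G}}_j-\mathbf{G}_j^*$, $|\lambda_j-\lambda_j^*|$ and $\widehat{\vector{h}}_j-\vector{h}_j^*$ then gives $\|\widehat{\vector{\theta}}_j-\vector{\theta}_j^*\|=\mathcal{O}_p(n^{-1/2})$.

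Next I would control $\widehat{\mathbf{\Gamma}}-\mathbf{\Gamma}^*$. Split it into three pieces: (a) the effect of replacing $\widehat{\vector{\theta}}_j$ by $\vector{\theta}_j^*$ in $\widehat{\vector{g}}$; (b) the effect of replacing $\widehat{\Sig}^{-1/2}$ by $\Sig^{-1/2}$ inside $\vector{y}_i=\widehat{\Sig}^{-1/2}\vector{x}_i$, which perturbs both the argument of the Gaussian-derivative basis functions and the additive term $\vector{y}_i$; and (c) the difference between the empirical average $\frac1n\sum_i$ and the expectation $\mathbb{E}$. The basis functions $\psi_{ij}$ and their first derivatives are smooth, bounded and rapidly decaying, and $\vector{y}$ has finite moments, so the sample map $(\vector{\theta},\mathbf{M})\mapsto\frac1n\sum_i(\vector{g}(\mathbf{M}\vector{x}_i)+\mathbf{M}\vector{x}_i)(\vector{g}(\mathbf{M}\vector{x}_i)+\mathbf{M}\vector{x}_i)^\top$ is Lipschitz with an $\mathcal{O}_p(1)$ constant uniformly over a neighbourhood of $(\vector{\theta}^*,\Sig^{-1/2})$; together with the bounds already obtained this makes (a) and (b) equal to $\mathcal{O}_p(n^{-1/2})$, while (c) is another central limit theorem. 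A standard stochastic-equicontinuity argument ensures that substituting the estimated $\widehat{\vector{\theta}}_j$ and $\widehat{\Sig}^{-1/2}$ into the average does not spoil the rate, so $\|\widehat{\mathbf{\Gamma}}-\mathbf{\Gamma}^*\|_{\mathrm{op}}=\mathcal{O}_p(n^{-1/2})$.

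To finish, I would apply a Davis--Kahan type $\sin\Theta$ bound. Assumption~2 keeps the top $\ds$ eigenvalues of $\mathbf{\Gamma}^*$ in a fixed interval $[c,C]\subset(0,+\infty)$, while $\vector{g}^*(\vector{y})+\vector{y}$ is concentrated on the $\ds$-dimensional $\mathrm{Range}(\B^{\prime})$ --- exactly so when the model is rich enough that $g^{*(j)}=\partial_j\log p$, and approximately otherwise by~\eqref{gradient} --- so the $(\ds+1)$-th eigenvalue stays below a fixed level and there is a spectral gap $\gamma>0$ separating the $\ds$-th and $(\ds+1)$-th eigenvalues. Davis--Kahan then converts the bound of the previous step into $\mathcal{D}(\widehat{\mathcal{I}},\mathcal{I}^*)=\mathcal{O}_p(\gamma^{-1}\|\widehat{\mathbf{\Gamma}}-\mathbf{\Gamma}^*\|_{\mathrm{op}})=\mathcal{O}_p(n^{-1/2})$, using that the distance $\mathcal{D}$ --- the infimum of Frobenius distances over orthonormal bases --- equals, up to a constant, the Frobenius norm of the sines of the principal angles. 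Finally $\widehat{\mathcal{L}}=\widehat{\Sig}^{-1/2}\widehat{\mathcal{I}}$ and $\mathcal{L}^*=\Sig^{-1/2}\mathcal{I}^*$: since $\widehat{\Sig}^{-1/2}$ is invertible with limiting condition number $\|\Sig^{-1/2}\|_{\mathrm{op}}\|\Sig^{1/2}\|_{\mathrm{op}}<\infty$ and converges to $\Sig^{-1/2}$ at rate $n^{-1/2}$, the induced action on $\ds$-dimensional subspaces is Lipschitz near $\Sig^{-1/2}$, and combining with the eigenspace bound yields $\mathcal{D}(\widehat{\mathcal{L}},\mathcal{L}^*)=\mathcal{O}_p(n^{-1/2})$.

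The main obstacle is the second step: the entries of $\widehat{\mathbf{\Gamma}}$ depend on the data through three coupled random objects --- the whitening matrix, the ridge coefficients, and the outer sample average --- all passed through the nonlinear Gaussian-derivative features, so one must show that composing and multiplying these $\mathcal{O}_p(n^{-1/2})$ perturbations does not inflate the rate. This requires either a careful multivariate delta-method expansion about $(\vector{\theta}^*,\Sig^{-1/2})$ or a uniform empirical-process (stochastic-equicontinuity) bound over a shrinking neighbourhood of the population values, and it is the smoothness, boundedness and fast decay of the basis functions that make it go through. A secondary technical point is pinning down the spectral gap of $\mathbf{\Gamma}^*$ needed for Davis--Kahan, which is exactly where Assumption~2 and the near-low-rank structure of $\vector{g}^*(\vector{y})+\vector{y}$ enter.
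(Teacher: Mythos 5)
Your proposal is correct and shares the paper's overall skeleton (coefficient error $\to$ error in $\widehat{\mathbf{\Gamma}}$ $\to$ eigenspace perturbation $\to$ pull-back through the whitening matrix), but the two key technical steps are carried out with genuinely different tools. For the ridge coefficients, the paper invokes the general theory of perturbed optimization (a second-order growth condition from the strong convexity supplied by $\lambda_j^*\ge\epsilon_\lambda$, a Lipschitz bound on the perturbed-minus-unperturbed objective, and Proposition~6.1 of \citet{bonnans98}), whereas you exploit the closed form $\widehat{\vector{\theta}}_j=-(\widehat{\mathbf{G}}_j+\lambda_j\I_b)^{-1}\widehat{\vector{h}}_j$ together with the resolvent identity; your route is more elementary and self-contained, while the paper's would survive if the estimator lost its closed form. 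For the eigenspaces, the paper uses the matrix/integral-operator perturbation results of \citet{koltchinskii98,koltchinskii00} (projection bound $\tfrac{4}{\epsilon_\mu}\|\widehat{\mathbf{\Gamma}}-\mathbf{\Gamma}^*\|_\mathrm{Fro}$), you use Davis--Kahan; these are interchangeable here and both rely on the same implicit assumption of a positive gap between the $\ds$-th and $(\ds+1)$-th eigenvalues, which neither condition~2 of the theorem nor your appeal to the near-low-rank structure of $\vector{g}^*(\vector{y})+\vector{y}$ strictly guarantees --- so you are no less rigorous than the paper on this point. One place where you are actually more careful: you explicitly account for the fact that $\widehat{\Sig}^{-1/2}$ enters the \emph{arguments} of the basis functions in $\widehat{\mathbf{\Gamma}}$ (your piece (b), handled by a delta-method/stochastic-equicontinuity argument), whereas the paper's Step~2.1 compares $\widehat{\vector{g}}$ and $\vector{g}^*$ at the same points $\vector{y}_i$ and only reintroduces the whitening error in the final pull-back step. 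That extra piece does need the uniform Lipschitz control you describe, but the smoothness and decay of the Gaussian-derivative bases make it go through, so your argument is sound.
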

 
 Theorem~\ref{thm:main} shows that LSNGCA is consistent, and its
 convergence rate is $\mathcal{O}_p(n^{-1/2})$ under mild
 conditions. The first is about the limits of $\ell_2$-regularizations,
 and it is easy to control. The second is also reasonable and easy to
 satisfy, as long as the centers are not located in regions with
 extremely low densities and the bandwidths are neither too large
 ($\widehat{\mathbf{\Gamma}}$ might be all-zero) nor too small
 ($\widehat{\mathbf{\Gamma}}$ might be unbounded).

 Our theorem is based on two powerful theories, one is of perturbed
 optimizations \citep{bonnans96,bonnans98}, and the other is of matrix
 approximation of integral operators
 \citep{koltchinskii98,koltchinskii00} that covers a theory of perturbed
 eigen-decompositions. According to the former, we can prove that
 $\widehat{\vector{\theta}}_j$ converges to $\vector{\theta}_j^*$ in
 $\mathcal{O}_p(n^{-1/2})$ and thus $\widehat{\mathbf{\Gamma}}$ to
 $\mathbf{\Gamma}^*$ in $\mathcal{O}_p(n^{-1/2})$. According to the
 latter, we can prove that $\widehat{\mathcal{I}}$ converges to
 $\mathcal{I}^*$ and therefore $\widehat{\mathcal{L}}$ to
 $\mathcal{L}^*$ in $\mathcal{O}_p\left(n^{-1/2}\right)$. The full proof
 can be found in Appendix~\ref{sec:proof}.
  \section{Illustration on Artificial Data}
  \label{sec:illust}
  \begin{figure}[t]
   \begin{center}
    \subfigure[Gaussian
    mixture]{\includegraphics[width=0.35\textwidth]{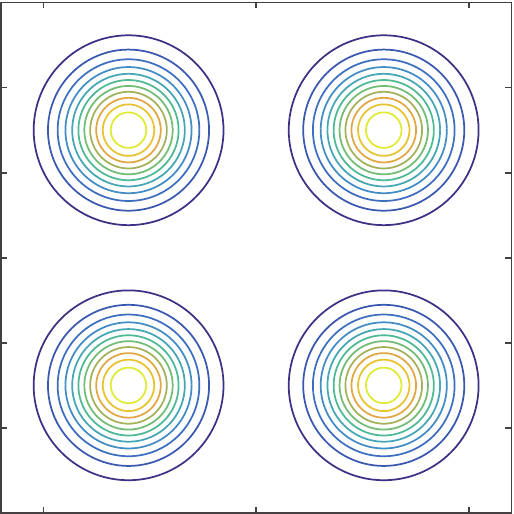}}
    \subfigure[Super-Gaussian]{\includegraphics[width=0.35\textwidth]{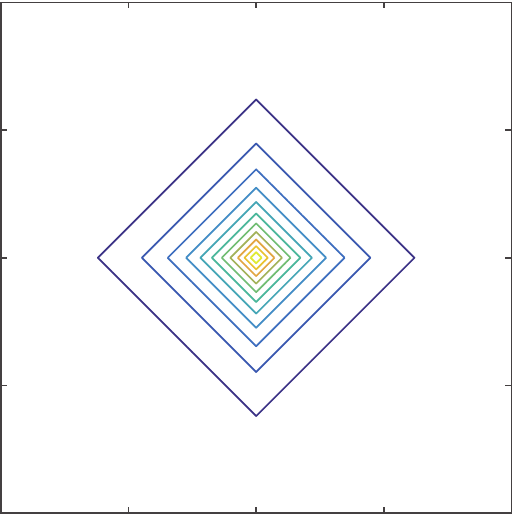}}
    \subfigure[Sub-Gaussian]{\includegraphics[width=0.35\textwidth]{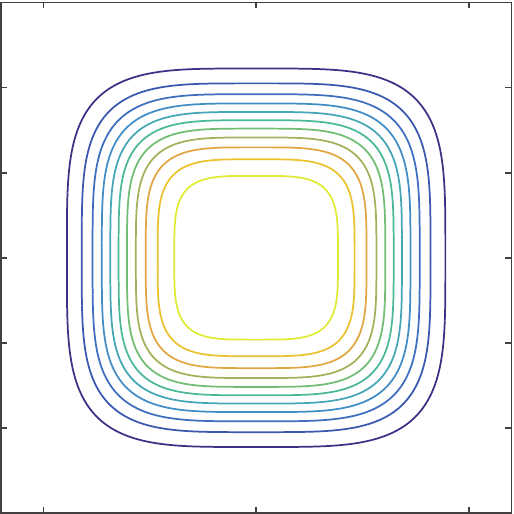}}
    \subfigure[Super- and
    sub-Gaussian]{\includegraphics[width=0.35\textwidth]{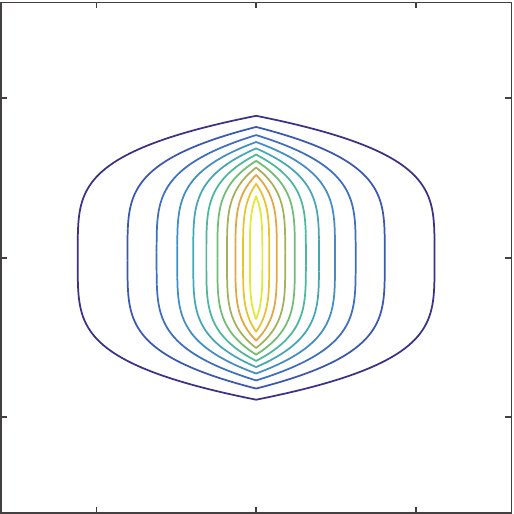}}
    \caption{\label{fig:dist} The two-dimensional distributions of four
    non-Gaussian densities.}
   \end{center}
  \end{figure}
  \begin{figure}[t]
   \begin{center}
    \subfigure[Gaussian
    mixture]{\includegraphics[width=0.45\textwidth]{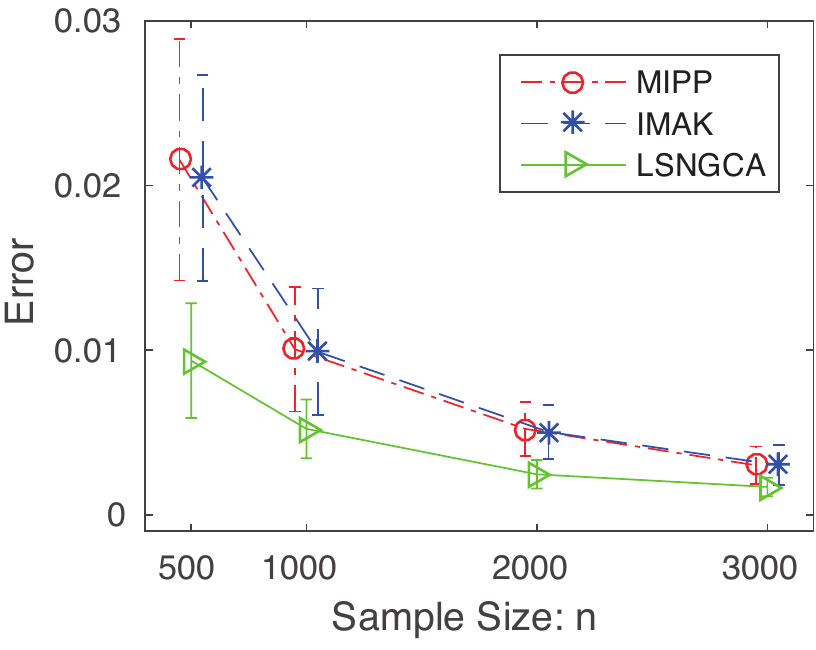}}
    \subfigure[Super-Gaussian]{\includegraphics[width=0.45\textwidth]{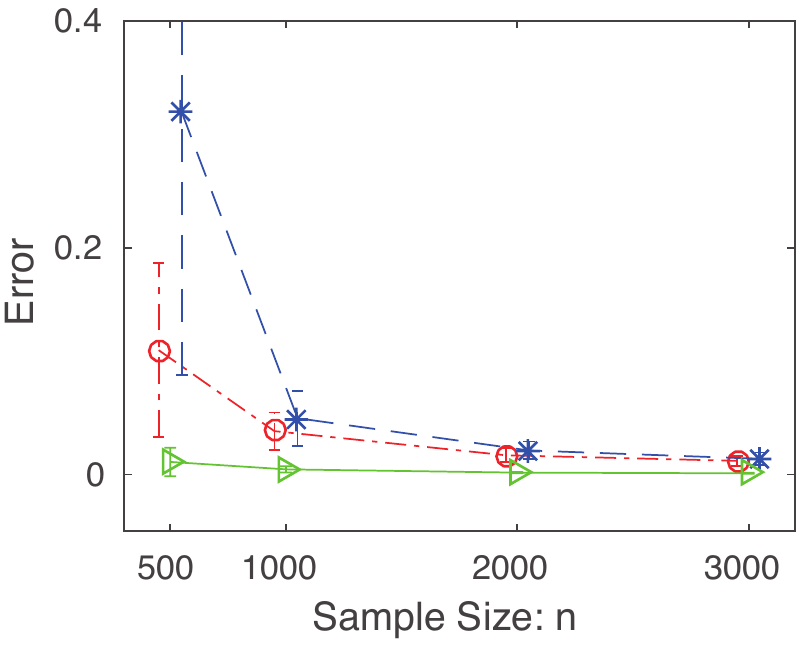}}
    \subfigure[Sub-Gaussian]{\includegraphics[width=0.45\textwidth]{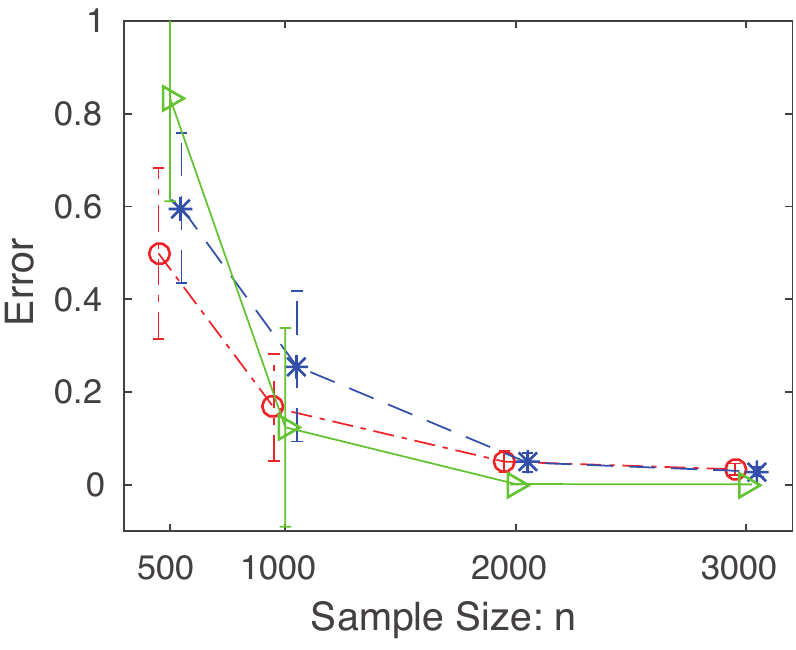}}
    \subfigure[Super- and
    sub-Gaussian]{\includegraphics[width=0.45\textwidth]{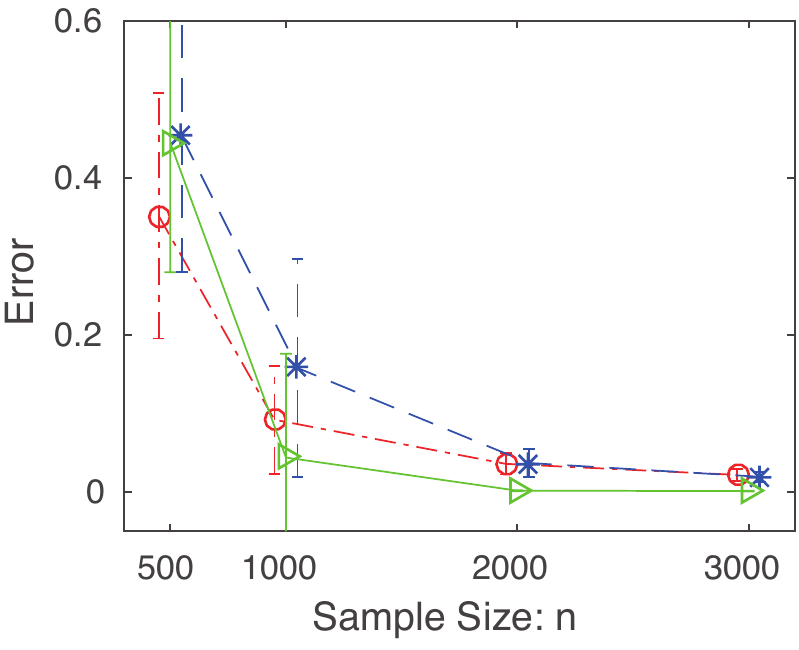}}
    \caption{\label{fig:error} The average errors over $50$ runs for
    four kinds of non-Gaussian signal components as the functions of
    samples size $n$. The error bars denote standard deviations.  The
    horizontal position of the markers for MIPP and IMAK was slightly
    modified to improve visibility of their error bars.}
  \end{center}
 \end{figure}
 \begin{figure}[t]
 \begin{center}
    \subfigure[Gaussian
    mixture]{\includegraphics[width=0.45\textwidth]{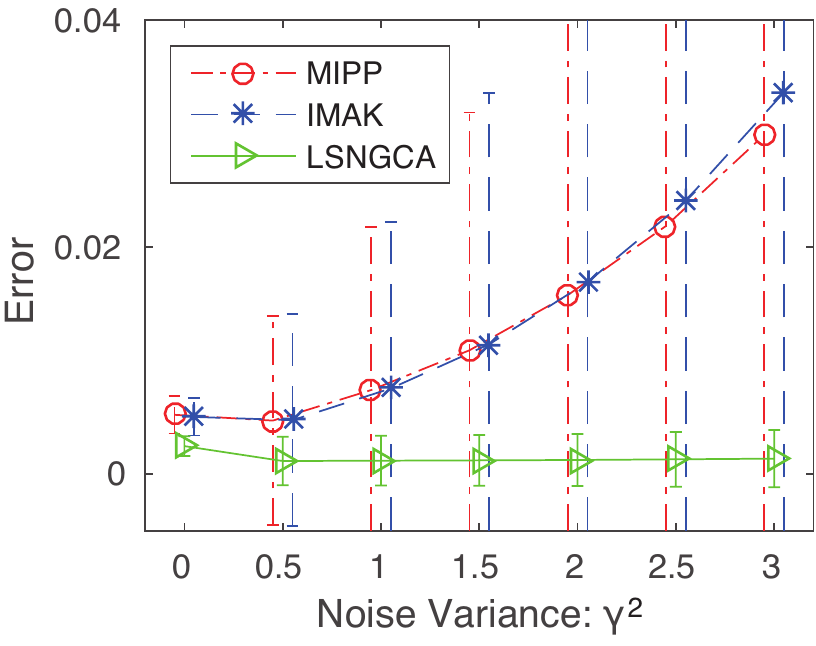}}
    \subfigure[Super-Gaussian]{\includegraphics[width=0.45\textwidth]{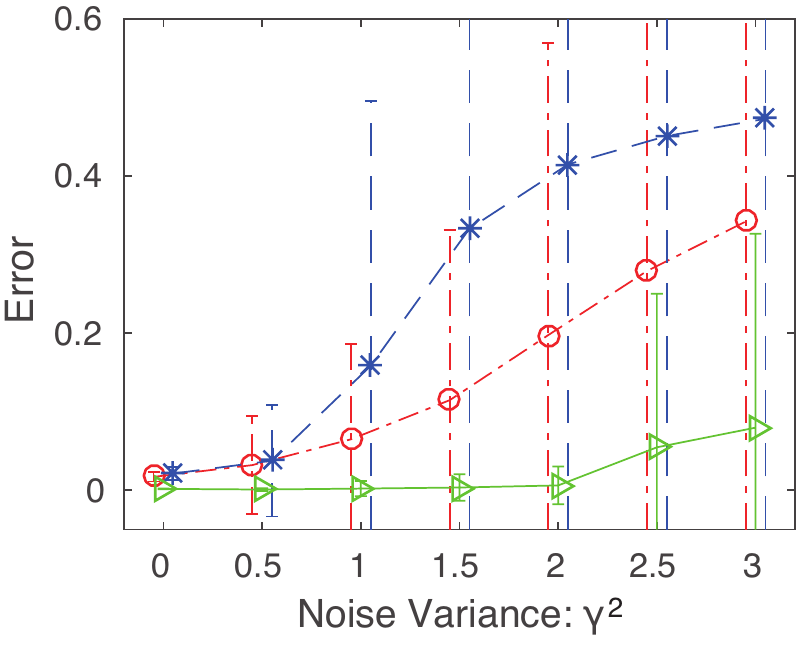}}
    \subfigure[Sub-Gaussian]{\includegraphics[width=0.45\textwidth]{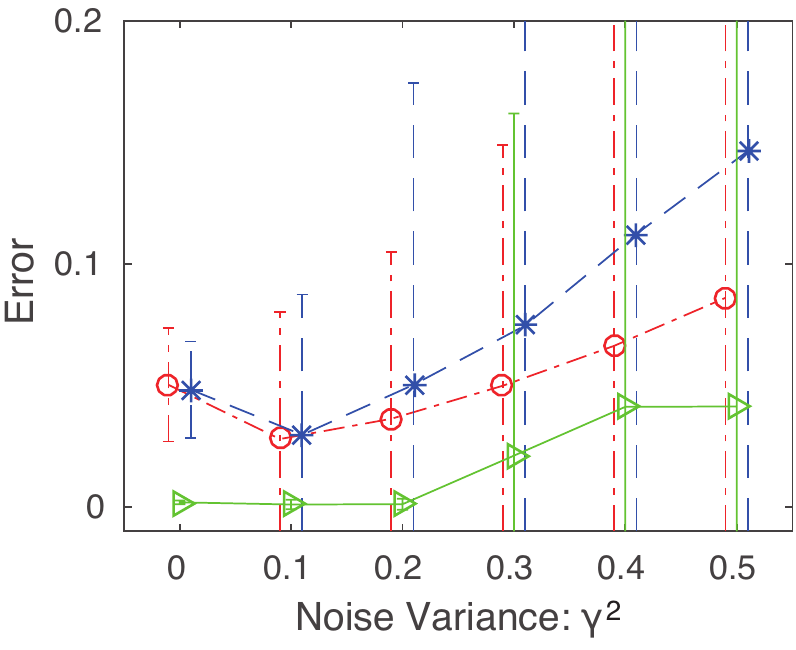}}
    \subfigure[Super- and
    sub-Gaussian]{\includegraphics[width=0.45\textwidth]{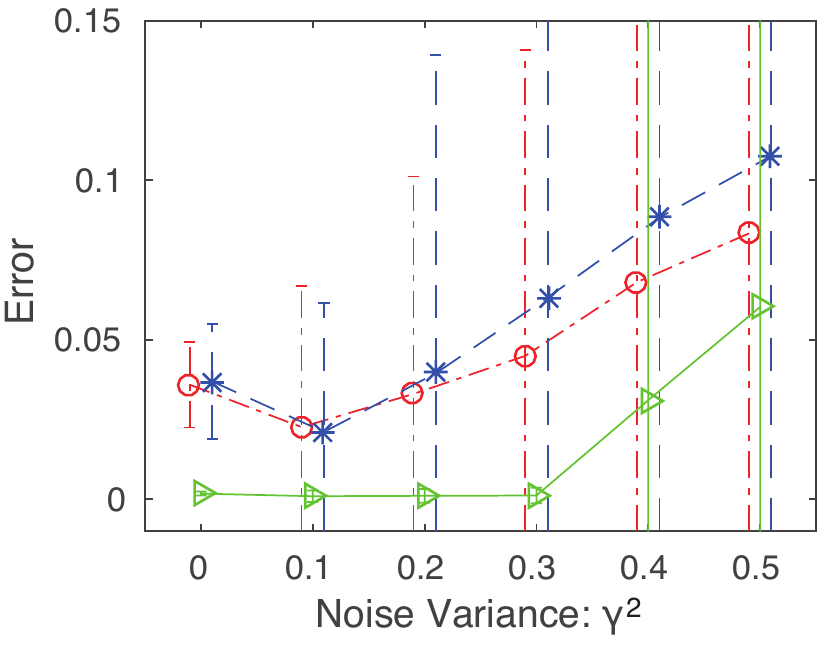}}
    \caption{\label{fig:sig} The average errors over $50$ runs for four
    kinds of non-Gaussian signal components as the functions of noise
    variances $\gamma^2$ when $n=2,000$. The horizontal position of the
    markers for MIPP and IMAK was slightly modified to improve
    visibility of their error bars.}
  \end{center}
 \end{figure}
  In this section, we experimentally illustrate how LSNGCA works on
  artificial data, and compare its performance with MIPP and IMAK.
 
 Non-Gaussian signal components $\vector{s}=(s_1,s_2)^{\top}$ were
 sampled from the following distributions:
 \begin{itemize}
  \item Gaussian mixture: $p(s_1,s_2)\propto\prod_{i=1}^2
	\exp\{-(s_i-3)^2/2\}+\exp\{-(s_i+3)^2/2\}$ (Fig.~\ref{fig:dist}(a)).
	
  \item Super-Gaussian: $p(s_1,s_2)\propto\prod_{i=1}^2
	\exp\left(-|s_i|/\alpha\right)$ where $\alpha$ is determined
	such that the variances of $s_1$ and $s_2$ are $3$
	(Fig.~\ref{fig:dist}(b)).
	
  \item Sub-Gaussian: $p(s_1,s_2)\propto\prod_{i=1}^2
	\exp(-s_i^4/\beta)$ where $\beta$ is determined such that the
	variances of $s_1$ and $s_2$ are $3$ (Fig.~\ref{fig:dist}(c)).
       
  \item Super- and sub-Gaussian: $p(s_1,s_2)=p(s_1)p(s_2)$ where
	$p(s_1)\propto\exp(-|s_1|/\alpha)$ and
	$p(s_2)\propto\exp(-s_2^4/\beta)$. $\alpha$ and $\beta$ is
	determined such that the variances of $s_1$ and $s_2$ are $3$
	(Fig.~\ref{fig:dist}(d)).
 \end{itemize}
 Then, data was generated according to
 $\vector{x}=(s_1,s_2,n_3,\dots,n_{10})$ where $n_i$ for $i=3,\dots,10$
 were sampled from the independent standard normal density. The error
 was measured by
 \begin{align*}
  \mathcal{E}(\widehat{\L},\L)= \frac{1}{\ds}\sum_{i=1}^{\ds}
  \|\widehat{\vector{e}}_i-\Pi_{\L}\widehat{\vector{e}}_i \|^2,
 \end{align*}
 where $\{\widehat{\vector{e}}_i\}_{i=1}^{\ds}$ is an orthonormal basis
 of $\widehat{\L}$, and $\Pi_{\L}$ denotes the orthogonal projection on
 $\L$. For model selection in LSLDG, a five-hold cross-validation was
 performed with respect to the hold-out error of \eqref{empJ} using the
 ten candidate values for $\sigma_j$ (or $\lambda_j$) from $10^{-1}$ (or
 $10^{-5}$) to $10$ at the regular interval in logarithmic scale .

 The results are presented in Fig.~\ref{fig:error}. For the Gaussian
 mixture and super-Gaussian cases, LSNGCA always works better than MIPP
 and IMAK even when the sample size is relatively small
 (Fig.~\ref{fig:error}(a) and (b)). On the other hand, when the signal
 components include sub-Gaussian components and the number of samples is
 insufficient, the performance of LSNGCA is not good
 (Fig.~\ref{fig:error}(c) and (d)). This presumably comes from the fact
 that estimating the gradients for logarithmic sub-Gaussian densities is
 more challenging than super-Gaussian densities. However, as long as the
 number of sample is sufficient, the performance of LSNGCA is comparable
 to or slightly better than other methods.
  
 Next, we investigate the performance of the three algorithms when the
 non-Gaussian signal components in data are contaminated by Gaussian
 noises such that $\vector{x}=(s_1+n_1,s_2+n_2,n_3,\dots,n_{10})$ where
 $n_1$ and $n_2$ are independently sampled from the Gaussian density
 with the mean $0$ and variance $\gamma^2$, while other $n_i$ for
 $i=3,\dots,10$ are sampled as in the last
 experiment. Fig.~\ref{fig:sig}(a) and (b) show that as $\gamma^2$
 increases, the estimation errors of LSNGCA for the Gaussian mixture or
 super-Gaussian distribution more mildly increases than MIPP and
 IMAK. When the data includes sub-Gaussian components, LSNGCA still
 works better than MIPP and IMAK for weak noise, but all methods are not
 robust to stronger noises.

 For computational costs, MIPP is the best method, while IMAK consumes
 much time (Fig.\ref{fig:ctime}). MIPP estimates a bunch of
 $\vector{\beta}_k$ by simply computing \eqref{empbeta}, and FastICA
 used in MIPP is an iterative method, but its convergence is
 fast. Therefore, MIPP is a quite efficient method. As reviewed in
 Section~\ref{ssec:IMAK}, because of the alternate and iterative
 procedure, IMAK is computationally demanding. LSNGCA is less efficient
 than MIPP, but its computational time is still reasonable.
  
 In short, LSNGCA is advantageous in terms of the sample size and noise
 tolerance especially when the non-Gaussian signal components follow
 multi-modal or super-Gaussian distributions. Furthermore, LSNGCA is not
 the most efficient algorithm, but its computational cost is reasonable.
  \begin{figure}[t]
   \begin{center}
    \includegraphics[width=0.45\textwidth]{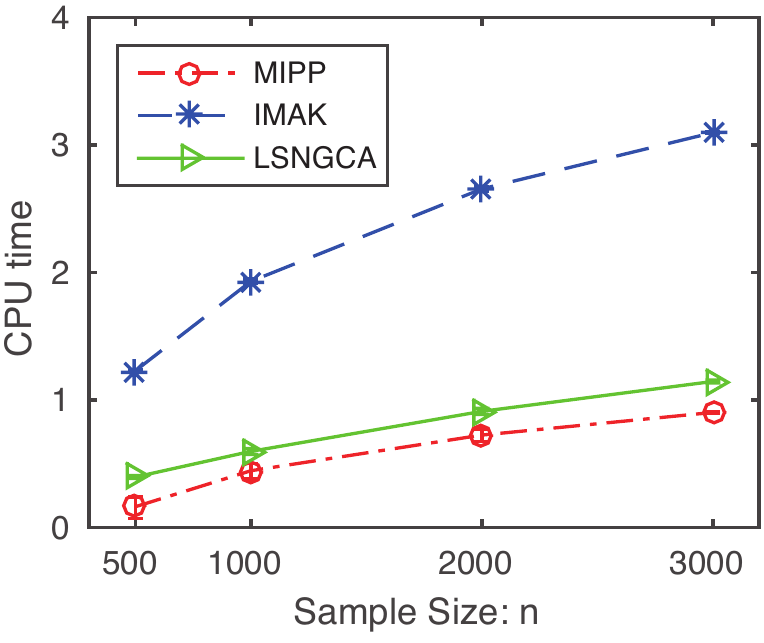}
    \caption{\label{fig:ctime} The average CPU time over $50$ runs for
    the Gaussian mixture as the functions of samples size $n$. The
    vertical axis is in logarithmic scale.}
   \end{center}
  \end{figure}
 \section{Application to Binary Classification on Benchmark Datasets}
 \label{sec:application}
 In this section, we apply LSNGCA to binary classification on benchmark
 datasets. For comparison, in addition to MIPP and IMAK, we employed PCA
 and locality preserving projections
 (LPP)~\citep{he2004locality}\footnote{\url{http://www.cad.zju.edu.cn/home/dengcai/Data/DimensionReduction.html}}.
 For LPP, the nearest-neighbor-based weight matrix were constructed
 using the heat kernel whose width parameter was fixed to $t_it_j$.
 $t_i$ is the Euclidean distance to the $k$-nearest neighbor sample of
 $\vector{x}_i$ and here we set $k=7$ as suggested
 by~\citet{zelnik2005self}.
 
 We used datasets for binary classification\footnote{The datasets,
 ``shuttle'' and ``vehicle'', originally include multiple classes. Here,
 to make datasets for binary classification, for ``shuttle'', we used
 only datasets corresponding to class 1 and 4, while for ``vehicle'', we
 assigned positive labels for class 1 and 3 datasets, and negative labels
 for the other datasets.} which are available at
 \url{https://www.csie.ntu.edu.tw/~cjlin/libsvmtools/datasets/}. For
 each dataset, we randomly selected $n$ samples for the training
 phase. The remaining samples were used for the test phase. For large
 datasets, we randomly chose $1,000$ samples for the training phase as
 well as for the test phase. As preprocessing, we separately subtracted
 the empirical means from the training and test samples. The projection
 matrix was estimated from the $n$ training samples by each
 method. Then, the support vector machine
 (SVM)~\citep{scholkopf2001learning} was trained using the
 dimension-reduced training data.\footnote{We employed a MATLAB software
 for SVM called \emph{LIBSVM}~\citep{CC01a}.}
 
 The averages and standard deviations for miss classification rates over
 $30$ runs are summarized in Table~\ref{tab:classification}. This table
 shows that LSNGCA overall compares favorably with other algorithms.
 \begin{table}[p]
  \begin{center}
   \caption{\label{tab:classification} The averages and standard
   deviations of misclassification rates for benchmark datasets over
   $30$ runs. The numbers in the parentheses are standard
   deviations. The best and comparable methods judged by the unpaired
   t-test at the significance level $1\%$ are described in boldface. The
   symbol ``-'' in the table means that IMAK unexpectedly stopped during
   the experiments because of a numerical problem.}
  \end{center}
\begin{center}
\begin{tabular}{c|c|c|c|c|c}
 \multicolumn{6}{c}{australian $(\dx, n)=(14, 200)$}\\ 
 & LSNGCA & MIPP & IMAK & PCA & LPP \\ \hline 
 $\ds=2$ & 20.20(5.09) & 21.02(6.66) & 33.43(4.99) & {\bf 17.37(1.30)} & {\bf 17.50(1.08)}\\
 $\ds=4$ & {\bf 16.23(2.60)} & {\bf 15.90(2.14)} & 32.53(6.06) & {\bf 14.92(1.17)} & {\bf 15.07(1.16)}\\
 $\ds=6$ & {\bf 15.41(2.32)} & {\bf 15.22(2.02)} & 30.71(5.71) & {\bf 14.16(1.16)} & {\bf 14.39(1.10)}
\end{tabular}
\end{center}
\begin{center}
\begin{tabular}{c|c|c|c|c|c}
\multicolumn{6}{c}{german.numer $(\dx, n)=(24, 200)$}\\ 
 & LSNGCA & MIPP & IMAK & PCA & LPP \\ \hline 
$\ds=2$ & {\bf 30.27(0.74)} & {\bf 30.35(0.77)} & - & {\bf 30.63(1.38)} & {\bf 30.82(1.52)}\\
$\ds=4$ & {\bf 30.29(0.62)} & {\bf 30.45(0.86)} & 31.12(1.22) & {\bf 29.90(1.68)} & {\bf 30.07(1.52)}\\
$\ds=6$ & 30.54(1.01) & 30.95(0.90) & 31.23(1.12) & {\bf 29.08(1.43)} & {\bf 29.46(1.09)}
\end{tabular}
\end{center}
\begin{center}
\begin{tabular}{c|c|c|c|c|c}
\multicolumn{6}{c}{liver-disorders $(\dx, n)=(6, 200)$}\\ 
 & LSNGCA & MIPP & IMAK & PCA & LPP \\ \hline 
$\ds=2$ & 39.31(3.62) & {\bf 32.62(3.72)} & {\bf 33.15(5.21)} & 42.14(2.71) & 42.00(2.96)\\
$\ds=4$ & {\bf 32.83(5.15)} & {\bf 32.02(3.67)} & 35.36(3.32) & 42.02(2.64) & 42.02(2.71)
\end{tabular}
\end{center}
\begin{center}
\begin{tabular}{c|c|c|c|c|c}
\multicolumn{6}{c}{SUSY $(\dx, n)=(18, 1000)$}\\ 
 & LSNGCA & MIPP & IMAK & PCA & LPP \\ \hline 
$\ds=2$ & {\bf 29.58(1.86)} & {\bf 29.42(1.70)} & 34.37(1.82) & {\bf 28.71(3.11)} & 35.26(1.87)\\
$\ds=4$ & {\bf 25.46(2.07)} & {\bf 25.91(1.70)} & 32.89(2.03) & 27.05(1.55) & 27.10(2.06)\\
$\ds=6$ & {\bf 23.32(1.73)} & 24.75(1.61) & 31.74(2.16) & 25.49(1.50) & 25.56(1.56)
\end{tabular}
\end{center}
\begin{center}
\begin{tabular}{c|c|c|c|c|c}
\multicolumn{6}{c}{shuttle $(\dx, n)=(9, 1000)$}\\ 
 & LSNGCA & MIPP & IMAK & PCA & LPP \\ \hline 
$\ds=2$ & {\bf 11.29(2.53)} & 14.39(3.34) & - & 16.01(2.20) & {\bf 11.41(3.53)}\\
$\ds=4$ & {\bf 6.04(3.24)} & 10.45(1.12) & 16.84(1.43) & 8.18(0.93) & 9.36(2.21)\\
$\ds=6$ & {\bf 3.03(1.73)} & 10.24(1.19) & 16.84(1.43) & 8.46(1.02) & 11.03(2.91)
\end{tabular}
\end{center}
\begin{center}
\begin{tabular}{c|c|c|c|c|c}
\multicolumn{6}{c}{vehicle $(\dx, n)=(18, 200)$}\\ 
 & LSNGCA & MIPP & IMAK & PCA & LPP \\ \hline 
$\ds=2$ & {\bf 41.23(4.26)} & 43.36(3.78) & 49.11(2.63) & {\bf 38.88(2.47)} & 46.97(2.44)\\
$\ds=4$ & {\bf 35.16(3.76)} & {\bf 34.26(4.13)} & 50.04(1.42) & 38.43(2.16) & 45.85(3.11)\\
$\ds=6$ & 30.72(3.95) & {\bf 26.60(2.24)} & 50.33(1.19) & 34.30(2.99) & 45.47(4.05)
\end{tabular}
\end{center}
\begin{center}
\begin{tabular}{c|c|c|c|c|c}
\multicolumn{6}{c}{svmguide3 $(\dx, n)=(21, 200)$}\\ 
 & LSNGCA & MIPP & IMAK & PCA & LPP \\ \hline 
$\ds=2$ & {\bf 22.58(1.55)} & {\bf 23.30(1.38)} & - & {\bf 23.22(1.12)} & 23.92(0.52)\\
$\ds=4$ & {\bf 22.32(1.59)} & {\bf 21.63(1.28)} & 23.93(0.52) & {\bf 21.74(0.92)} & 23.45(0.75)\\
$\ds=6$ & 22.20(1.54) & {\bf 21.29(0.96)} & 23.92(0.52) & 22.06(0.96) & 23.53(0.68)
\end{tabular}
\end{center}
 \end{table}
 \section{Conclusion}
 \label{sec:conc}
 In this paper, we proposed a novel algorithm for non-Gaussian component
 analysis (NGCA) called the \emph{least-squares NGCA} (LSNGCA). The
 subspace identification in LSNGCA is performed using the eigenvalue
 decomposition without any iterative procedures, and thus LSNGCA is
 computationally reasonable. Through theoretical analysis, we
 established the optimal convergence rate in a parametric setting for
 the subspace identification. The experimental results confirmed that
 LSNGCA performs better than existing algorithms especially for
 multi-modal or super-Gaussian signal components, and reasonably works
 on benchmark datasets.
\subsubsection*{Acknowledgements}
Hiroaki Sasaki did most of this work when he was working at the
university of Tokyo.
\bibliography{papers,new_bib}
\bibliographystyle{abbrvnat}
\appendix
\section{Proof of Theorem~1}
\label{sec:proof}%

Our proof can be divided into two parts as mentioned in
Section~\ref{sec:theory}.

\subsection{Part One: Convergence of LSLDG}

\subsubsection{Step 1.1}

First of all, we establish the growth condition of LSLDG (see
\emph{Definition~6.1} in~\citet{bonnans98} for the detailed definition
of the growth condition). Denote the expected and empirical objective
functions by
\begin{align*}
  J_j^*(\vector{\theta}) &=
  \vector{\theta}^{\top}\mathbf{G}_j^*\vector{\theta}
  +2\vector{\theta}^{\top}\vector{h}_j^*
  +\lambda_j^*\vector{\theta}^{\top}\vector{\theta},\\
  \widehat{J}_j(\vector{\theta}) &=
  \vector{\theta}^{\top}\widehat{\mathbf{G}}_j\vector{\theta}
  +2\vector{\theta}^{\top}\widehat{\vector{h}}_j
  +\lambda_j\vector{\theta}^{\top}\vector{\theta}.
\end{align*}
Then $\vector{\theta}_j^* = \argmin\nolimits_{\vector{\theta}}J_j^*(\vector{\theta})$ and $\widehat{\vector{\theta}}_j = \argmin\nolimits_{\vector{\theta}}\widehat{J}_j(\vector{\theta})$, and we have

\begin{ganglemma}
  \label{lem:grow}%
  The following second-order growth condition holds
  \begin{equation*}
    J_j^*(\vector{\theta}) \ge
    J_j^*(\vector{\theta}_j^*)+\epsilon_\lambda\|\vector{\theta}-\vector{\theta}_j^*\|_2^2.
  \end{equation*}
\end{ganglemma}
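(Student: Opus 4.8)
The plan is to treat $J_j^*$ as exactly what it is: a strictly convex quadratic form on $\mathbb{R}^b$ whose Hessian is constant and uniformly bounded below. First I would note that the Hessian of $J_j^*$ is the constant matrix $2(\mathbf{G}_j^*+\lambda_j^*\I_b)$. Since $\mathbf{G}_j^*=\mathbb{E}[\vector{\psi}_j(\vector{x})\vector{\psi}_j(\vector{x})^{\top}]$ is a Gram-type matrix it is positive semidefinite, and condition~1 of Theorem~\ref{thm:main} gives $\lambda_j^*\ge\epsilon_\lambda>0$; hence $\mathbf{G}_j^*+\lambda_j^*\I_b\succeq\epsilon_\lambda\I_b$. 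In particular $J_j^*$ is $\epsilon_\lambda$-strongly convex, so its minimizer $\vector{\theta}_j^*$ exists and is unique, and the first-order optimality condition reads $(\mathbf{G}_j^*+\lambda_j^*\I_b)\vector{\theta}_j^*=-\vector{h}_j^*$.

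Next I would use the fact that for a quadratic function the second-order Taylor expansion about any point is exact. Expanding $J_j^*$ around $\vector{\theta}_j^*$,
\[
 J_j^*(\vector{\theta})=J_j^*(\vector{\theta}_j^*)+2\bigl((\mathbf{G}_j^*+\lambda_j^*\I_b)\vector{\theta}_j^*+\vector{h}_j^*\bigr)^{\top}(\vector{\theta}-\vector{\theta}_j^*)+(\vector{\theta}-\vector{\theta}_j^*)^{\top}(\mathbf{G}_j^*+\lambda_j^*\I_b)(\vector{\theta}-\vector{\theta}_j^*).
\]
The linear term vanishes by the optimality condition, and the quadratic term is bounded below by $\epsilon_\lambda\|\vector{\theta}-\vector{\theta}_j^*\|_2^2$ by the eigenvalue estimate established above. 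Combining these two observations yields the claimed second-order growth condition.

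There is essentially no hard step here; the lemma is a direct consequence of strong convexity, with growth constant $\epsilon_\lambda$. The only point that deserves a line of care is that $\mathbf{G}_j^*$ and $\vector{h}_j^*$ are finite, i.e.\ $\vector{\psi}_j$ and $\partial_j\vector{\psi}_j$ are square-integrable with respect to $p$; this holds because the $\psi_{ij}$ are derivatives of Gaussians and hence bounded with bounded derivatives, which also ensures that the expected objective $J_j^*$ is well defined so that the expansion above makes sense. Phrased in the terminology of \citet{bonnans98}, this inequality is precisely their second-order growth condition, which is the hypothesis we will invoke later to transfer the $\mathcal{O}_p(n^{-1/2})$ rate from $\widehat{\vector{\theta}}_j$ to $\vector{\theta}_j^*$ through the perturbed-optimization machinery.
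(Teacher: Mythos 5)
Your proof is correct and follows essentially the same route as the paper's: both reduce the claim to the fact that $J_j^*$ is a quadratic whose Hessian $2(\mathbf{G}_j^*+\lambda_j^*\mathbf{I}_b)$ is bounded below by $2\lambda_j^*\mathbf{I}_b\succeq 2\epsilon_\lambda\mathbf{I}_b$, kill the linear term via first-order optimality, and invoke $\lambda_j^*\ge\epsilon_\lambda$. Your writing out the exact second-order Taylor expansion (and only claiming positive semidefiniteness of the Gram matrix $\mathbf{G}_j^*$, which is all that is needed) is, if anything, slightly more careful than the paper's appeal to strong convexity.
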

\begin{proof}
  $J_j^*(\vector{\theta})$ is strongly convex with parameter at least $2\lambda_j^*$, since $\mathbf{G}_j^*$ is symmetric and positive-definite. Hence,
  \begin{align*}
    J_j^*(\vector{\theta})
    &\ge J_j^*(\vector{\theta}_j^*)
    +(\nabla J_j^*(\vector{\theta}_j^*))^\top(\vector{\theta}-\vector{\theta}_j^*)
    +\lambda_j^*\|\vector{\theta}-\vector{\theta}_j^*\|_2^2\\
    &\ge J_j^*(\vector{\theta}_j^*)+\epsilon_\lambda\|\vector{\theta}-\vector{\theta}_j^*\|_2^2,
  \end{align*}
  where we used the optimality condition $\nabla J_j^*(\vector{\theta}_j^*)=\vector{0}$ and the first condition $\lambda_j^*\ge\epsilon_\lambda$ of the theorem.
\end{proof}

\subsubsection{Step 1.2}

Second, we study the stability (with respect to perturbation) of $J_j^*(\vector{\theta})$ at $\vector{\theta}_j^*$. Let
\begin{equation*}
  \vector{u}=\{\vector{u}_G\in\mathcal{S}_+^b,
  \vector{u}_h\in\mathbb{R}^b,
  u_\lambda\in\mathbb{R}\}
\end{equation*}
be a set of perturbation parameters, where $b$ is the number of centers in $\psi_{ij}(\vector{x})$ and $\mathcal{S}_+^b\subset\mathbb{R}^{b\times b}$ is the cone of $b$-by-$b$ symmetric positive semi-definite matrices. Define our perturbed objective function by
\begin{equation*}
  J_j(\vector{\theta},\vector{u}) =
  \vector{\theta}^{\top}(\mathbf{G}_j^*+\vector{u}_G)\vector{\theta}
  +2\vector{\theta}^{\top}(\vector{h}_j^*+\vector{u}_h)
  +(\lambda_j^*+u_\lambda)\vector{\theta}^{\top}\vector{\theta}.
\end{equation*}
It is clear that $J_j^*(\vector{\theta})=J_j(\vector{\theta},\vector{0})$, and then the stability of $J_j^*(\vector{\theta})$ at $\vector{\theta}_j^*$ can be characterized as follows.

\begin{ganglemma}
  \label{lem:lips}%
  The difference function $J_j(\cdot,\vector{u})-J_j^*(\cdot)$ is Lipschitz continuous modulus
  \begin{equation*}
    \omega(\vector{u}) = \mathcal{O}
    (\|\vector{u}_G\|_\mathrm{Fro} +\|\vector{u}_h\|_2 +|u_\lambda|)
  \end{equation*}
  on a sufficiently small neighborhood of $\vector{\theta}_j^*$.
\end{ganglemma}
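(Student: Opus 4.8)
The plan is to write the difference function in closed form — it is a quadratic in $\vector{\theta}$ — and then bound its gradient uniformly over a bounded neighborhood of $\vector{\theta}_j^*$. Subtracting the two objectives cancels the unperturbed terms and leaves
\begin{equation*}
  J_j(\vector{\theta},\vector{u})-J_j^*(\vector{\theta})
   = \vector{\theta}^{\top}(\vector{u}_G+u_\lambda\I_b)\vector{\theta}
   + 2\vector{\theta}^{\top}\vector{u}_h ,
\end{equation*}
which is itself a quadratic form with gradient $2(\vector{u}_G+u_\lambda\I_b)\vector{\theta}+2\vector{u}_h$ (using that $\vector{u}_G$ is symmetric, being a member of $\mathcal{S}_+^b$) and constant Hessian $2(\vector{u}_G+u_\lambda\I_b)$.

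Next I would fix once and for all a bounded neighborhood $\mathcal{N}=\{\vector{\theta}\in\mathbb{R}^b:\|\vector{\theta}-\vector{\theta}_j^*\|_2\le\delta\}$ for some small constant $\delta>0$, so that $\|\vector{\theta}\|_2\le M:=\|\vector{\theta}_j^*\|_2+\delta$ for every $\vector{\theta}\in\mathcal{N}$, with $M$ independent of $\vector{u}$. On $\mathcal{N}$ the gradient of the difference function is then bounded in Euclidean norm by
\begin{equation*}
  2\|\vector{u}_G+u_\lambda\I_b\|_\mathrm{Fro}\,M + 2\|\vector{u}_h\|_2
   \le 2M\|\vector{u}_G\|_\mathrm{Fro} + 2M\sqrt{b}\,|u_\lambda| + 2\|\vector{u}_h\|_2 ,
\end{equation*}
where I used $\|A\vector{\theta}\|_2\le\|A\|_\mathrm{Fro}\|\vector{\theta}\|_2$, the triangle inequality, and $\|\I_b\|_\mathrm{Fro}=\sqrt{b}$. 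Since $M$ and $b$ are constants, this upper bound is $\mathcal{O}(\|\vector{u}_G\|_\mathrm{Fro}+\|\vector{u}_h\|_2+|u_\lambda|)=\omega(\vector{u})$. Applying the mean value inequality along the segment joining any two points of the convex set $\mathcal{N}$ then gives, for all $\vector{\theta},\vector{\theta}'\in\mathcal{N}$,
\begin{equation*}
  \bigl|\,[J_j(\vector{\theta},\vector{u})-J_j^*(\vector{\theta})]
        -[J_j(\vector{\theta}',\vector{u})-J_j^*(\vector{\theta}')]\,\bigr|
   \le \omega(\vector{u})\,\|\vector{\theta}-\vector{\theta}'\|_2 ,
\end{equation*}
which is exactly the asserted Lipschitz modulus.

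There is no genuinely hard step here; the only point deserving a word of care is that Lipschitz continuity is claimed — and can only be claimed — locally, since the difference function is a true quadratic in $\vector{\theta}$ and hence fails to be globally Lipschitz unless the perturbation $\vector{u}_G+u_\lambda\I_b$ vanishes. Restricting to the bounded convex neighborhood $\mathcal{N}$ is precisely what renders the gradient — and therefore the Lipschitz constant — finite and of the stated first order in $\vector{u}$; the sign constraint $\vector{u}_G\in\mathcal{S}_+^b$ is irrelevant for this estimate and will only enter later. Together, Lemma~\ref{lem:grow} (second-order growth) and this lemma (Lipschitz stability with modulus $\omega$) supply the two hypotheses needed to invoke the perturbed-optimization machinery of \citet{bonnans98}, from which $\|\widehat{\vector{\theta}}_j-\vector{\theta}_j^*\|_2=\mathcal{O}_p(\omega(\widehat{\vector{u}}))$ follows, and $\omega(\widehat{\vector{u}})=\mathcal{O}_p(n^{-1/2})$ by the central limit theorem applied to $\widehat{\mathbf{G}}_j$ and $\widehat{\vector{h}}_j$ together with the first theorem condition on $\lambda_j$.
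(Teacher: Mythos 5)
Your proposal is correct and follows essentially the same route as the paper: write the difference as an explicit quadratic in $\vector{\theta}$, bound its gradient on a $\delta$-ball around $\vector{\theta}_j^*$ using $\|\vector{\theta}\|_2\le\|\vector{\theta}_j^*\|_2+\delta$, and read off a Lipschitz constant of order $\|\vector{u}_G\|_\mathrm{Fro}+\|\vector{u}_h\|_2+|u_\lambda|$. The only cosmetic differences are that you group $u_\lambda\I_b$ with $\vector{u}_G$ (picking up a harmless $\sqrt{b}$ constant) and spell out the mean-value step on the convex ball, which the paper leaves implicit.
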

\begin{proof}
  The difference function is
  \[ J_j(\vector{\theta},\vector{u})-J_j^*(\vector{\theta})
  = \vector{\theta}^{\top}\vector{u}_G\vector{\theta}
  +2\vector{\theta}^{\top}\vector{u}_h
  +u_\lambda\vector{\theta}^{\top}\vector{\theta}, \]
  with a partial gradient
  \[ \frac{\partial}{\partial\vector{\theta}}(J_j(\vector{\theta},\vector{u})-J_j^*(\vector{\theta}))
  = 2\vector{u}_G\vector{\theta} +2\vector{u}_h +2u_\lambda\vector{\theta}. \]
  Notice that due to the $\ell_2$-regularization in $J_j^*(\vector{\theta})$, $\exists M>0$ such that $\|\vector{\theta}_j^*\|_2\le M$. Now given a $\delta$-ball of $\vector{\theta}_j^*$, i.e., $B_\delta(\vector{\theta}_j^*) = \{\vector{\theta}\mid\|\vector{\theta}-\vector{\theta}_j^*\|_2\le\delta\}$, it is easy to see that $\forall\vector{\theta}\in B_\delta(\vector{\theta}_j^*)$,
  \[ \|\vector{\theta}\|_2
  \le \|\vector{\theta}-\vector{\theta}_j^*\|_2 +\|\vector{\theta}_j^*\|_2
  \le \delta+M, \]
  and consequently
  \[ \left\|\frac{\partial}{\partial\vector{\theta}}
  (J_j(\vector{\theta},\vector{u})-J_j^*(\vector{\theta}))\right\|_2 \le
  2(\delta+M)(\|\vector{u}_G\|_\mathrm{Fro}+|u_\lambda|)+2\|\vector{u}_h\|_2. \]
  This says that the gradient $\frac{\partial}{\partial\vector{\theta}}(J_j(\vector{\theta},\vector{u})-J_j^*(\vector{\theta}))$ has a bounded norm of order $\mathcal{O}(\|\vector{u}_G\|_\mathrm{Fro}+\|\vector{u}_h\|_2+|u_\lambda|)$, and proves that the difference function $J_j(\vector{\theta},\vector{u})-J_j^*(\vector{\theta})$ is Lipschitz continuous on the ball $B_\delta(\vector{\theta}_j^*)$, with a Lipschitz constant of the same order.
\end{proof}

\subsubsection{Step 1.3}

Intuitively, Lemma~\ref{lem:grow} guarantees that the unperturbed objective function $J_j^*(\vector{\theta})$ grows quickly when $\vector{\theta}$ leaves $\vector{\theta}_j^*$. Lemma~\ref{lem:lips} guarantees that the perturbed objective function $J_j(\vector{\theta},\vector{u})$ changes slowly for $\vector{\theta}$ around $\vector{\theta}_j^*$, where the slowness is with respect to the perturbation $\vector{u}$ it suffers. Based on Lemma~\ref{lem:grow}, Lemma~\ref{lem:lips}, and \emph{Proposition 6.1} in \citet{bonnans98},
\begin{equation*}
  \|\widehat{\vector{\theta}}_j-\vector{\theta}_j^*\|_2
  \le \frac{\omega(\vector{u})}{\epsilon_\lambda}
  = \mathcal{O}(\|\vector{u}_G\|_\mathrm{Fro}+\|\vector{u}_h\|_2+|u_\lambda|),
\end{equation*}
since $\widehat{\vector{\theta}}_j$ is the exact solution to $\widehat{J}_j(\vector{\theta})=J_j(\vector{\theta},\vector{u})$ given $\vector{u}_G=\widehat{\mathbf{G}}_j-\mathbf{G}_j^*$, $\vector{u}_h=\widehat{\mathbf{h}}_j-\mathbf{h}_j^*$, and $u_\lambda=\lambda_j-\lambda_j^*$. 
 
According to the \emph{central limit theorem} (CLT), $\|\vector{u}_G\|_\mathrm{Fro}=\mathcal{O}_p(n^{-1/2})$ and $\|\vector{u}_h\|_2=\mathcal{O}_p(n^{-1/2})$. Furthermore, we have already assumed that $|u_\lambda|=\mathcal{O}(n^{-1/2})$ in the first condition of the theorem. Hence, as $n\to\infty$,
\begin{equation}
  \label{eq:diff-theta}%
  \|\widehat{\vector{\theta}}_j-\vector{\theta}_j^*\|_2
  = \mathcal{O}_p\left(n^{-1/2}\right).
\end{equation}

\subsubsection{Step 1.4}

Considering the empirical estimate of the log-density gradient $\widehat{g}^{(j)}(\vector{x})$ and the optimal estimate of the log-density gradient $g^{*(j)}(\vector{x})$, their gap in terms of the infinity norm is bounded below:
\begin{align*}
  \|\widehat{g}^{(j)}-g^{*(j)}\|_\infty
  &= \sup\nolimits_{\vector{x}}|\widehat{g}^{(j)}(\vector{x})-g^{*(j)}(\vector{x})|\\
  &= \sup\nolimits_{\vector{x}}|(\widehat{\vector{\theta}}_j-\vector{\theta}_j^*)^{\top}
  \vector{\psi}_j(\vector{x})|\\
  &\le \|\widehat{\vector{\theta}}_j-\vector{\theta}_j^*\|_2
  \cdot \sup\nolimits_{\vector{x}}\|\vector{\psi}_j(\vector{x})\|_2,
\end{align*}
where the \emph{Cauchy-Schwarz inequality} is used. Recall that $\vector{c}_1,\ldots,\vector{c}_b$ are the centers, and for any $i$,
\begin{equation*}
  |\psi_{ij}(\vector{x})|
  = \frac{|[\vector{c}_i-\vector{x}]^{(j)}|}{\sigma_j^2}
  \exp\left(-\frac{\|\vector{x}-\vector{c}_i\|_2^2}{2\sigma_j^2}\right)
  \le \frac{|[\vector{c}_i-\vector{x}]^{(j)}|}{\sigma_j^2}
  \left(-\frac{([\vector{x}-\vector{c}_i]^{(j)})^2}{2\sigma_j^2}\right).
\end{equation*}
It is obvious that $|\psi_{ij}(\vector{x})|$ is bounded, since $\exp(-z^2)$ converges to zero much faster than $|z|$ diverges to infinity. Therefore, $\sup_{\vector{x}}\|\vector{\psi}_j(\vector{x})\|_2$ is a finite number, and we could know from Eq.~\eqref{eq:diff-theta} that
\begin{equation}
  \label{eq:diff-g}%
  \|\widehat{g}^{(j)}-g^{*(j)}\|_\infty
  \le \mathcal{O}(\|\widehat{\vector{\theta}}_j-\vector{\theta}_j^*\|_2)
  = \mathcal{O}_p\left(n^{-1/2}\right).
\end{equation}

\subsection{Part Two: Convergence of LSNGCA}

\subsubsection{Step 2.1}

To begin with, we focus on the convergence of $\widehat{\mathbf{\Gamma}}$. Given any $\vector{y}$, let $\widehat{\vector{z}}=\widehat{\vector{g}}(\vector{y})$ and $\vector{z}^*=\vector{g}^*(\vector{y})$, then
\begin{align*}
  (\widehat{\vector{z}}+\vector{y})(\widehat{\vector{z}}+\vector{y})^{\top}
  -(\vector{z}^*+\vector{y})(\vector{z}^*+\vector{y})^{\top}
  &= \widehat{\vector{z}}\widehat{\vector{z}}^{\top} -\vector{z}^*\vector{z}^{*\top}
  +(\widehat{\vector{z}}-\vector{z}^*)\vector{y}^{\top}
  +\vector{y}(\widehat{\vector{z}}-\vector{z}^*)^{\top}\\
  &= (\widehat{\vector{z}}-\vector{z}^*)\widehat{\vector{z}}^{\top}
  +\vector{z}^*(\widehat{\vector{z}}-\vector{z}^*)^{\top}
  +(\widehat{\vector{z}}-\vector{z}^*)\vector{y}^{\top}
  +\vector{y}(\widehat{\vector{z}}-\vector{z}^*)^{\top}.
\end{align*}
As a result, based on Eq.~\eqref{eq:diff-g},
\begin{align*}
  \|(\widehat{\vector{z}}+\vector{y})(\widehat{\vector{z}}+\vector{y})^{\top}
  -(\vector{z}^*+\vector{y})(\vector{z}^*+\vector{y})^{\top}\|_\mathrm{Fro}
  &\le \|(\widehat{\vector{z}}-\vector{z}^*)\widehat{\vector{z}}^{\top}\|_\mathrm{Fro}
  +\|\vector{z}^*(\widehat{\vector{z}}-\vector{z}^*)^{\top}\|_\mathrm{Fro}\\
  &\quad +\|(\widehat{\vector{z}}-\vector{z}^*)\vector{y}^{\top}\|_\mathrm{Fro}
  +\|\vector{y}(\widehat{\vector{z}}-\vector{z}^*)^{\top}\|_\mathrm{Fro}\\
  &\le (\|\widehat{\vector{z}}\|_2+\|\vector{z}^*\|_2+2\|\vector{y}\|)
  \cdot \|\widehat{\vector{z}}-\vector{z}^*\|_2\\
  &= \mathcal{O}(\|\widehat{\vector{z}}-\vector{z}^*\|_2)\\
  &= \mathcal{O}_p\left(n^{-1/2}\right).
\end{align*}
This has proved the point-wise convergence from $(\widehat{\vector{g}}(\vector{y})+\vector{y})(\widehat{\vector{g}}(\vector{y})+\vector{y})^{\top}$ to $(\vector{g}^*(\vector{y})+\vector{y})(\vector{g}^*(\vector{y})+\vector{y})^{\top}$.

Define an intermediate matrix based on $\vector{y}_1,\ldots,\vector{y}_n$ as
\begin{equation*}
  \widetilde{\mathbf{\Gamma}} =\frac{1}{n}\sum_{i=1}^n
  (\vector{g}^*(\vector{y}_i)+\vector{y}_i)
  (\vector{g}^*(\vector{y}_i)+\vector{y}_i)^{\top}.
\end{equation*}
Subsequently, $\widehat{\mathbf{\Gamma}}$ converges to $\widetilde{\mathbf{\Gamma}}$ in $\mathcal{O}_p(n^{-1/2})$ due to the point-wise convergence from $(\widehat{\vector{g}}(\vector{y})+\vector{y})(\widehat{\vector{g}}(\vector{y})+\vector{y})^{\top}$ to $(\vector{g}^*(\vector{y})+\vector{y})(\vector{g}^*(\vector{y})+\vector{y})^{\top}$ that was just proved, and $\widetilde{\mathbf{\Gamma}}$ converges to $\mathbf{\Gamma}^*$ in $\mathcal{O}_p(n^{-1/2})$ due to CLT. A combination of these two results gives us
\begin{equation}
  \label{eq:diff-gamma}%
  \|\widehat{\mathbf{\Gamma}}-\mathbf{\Gamma}^*\|_\mathrm{Fro}
  \le \|\widehat{\mathbf{\Gamma}}-\widetilde{\mathbf{\Gamma}}\|_\mathrm{Fro}
  +\|\widetilde{\mathbf{\Gamma}}-\mathbf{\Gamma}^*\|_\mathrm{Fro}
  = \mathcal{O}_p\left(n^{-1/2}\right).
\end{equation}

\subsubsection{Step 2.2}

Now let us consider the eigenvalues of $\mathbf{\Gamma}^*$. Let $\mu_1>\cdots>\mu_r>\mu_{r+1}$ be the first $r+1$ eigenvalues of $\mathbf{\Gamma}^*$ \emph{counted without multiplicity}, such that $\mu_r$ is the $\ds$-th largest eigenvalue of $\mathbf{\Gamma}^*$ if counted with multiplicity. Define the eigen-gap by
\begin{equation*}
  \epsilon_\mu=\min_{i=1,\ldots,r}\{\mu_i-\mu_{i+1}\}.
\end{equation*}
We have assumed that $\mu_1<+\infty$ and $\mu_r>0$ in the second condition of the theorem, and thus it must hold that $0<\epsilon_\mu<+\infty$. In the case that $\mathbf{\Gamma}^*$ has only one eigenvalue, we can simply assign $\epsilon_\mu=1$.

According to \emph{Lemma 5.2} of \citet{koltchinskii00} as well as the appendix of \citet{koltchinskii98}, we can derive the stability of the eigen-decomposition of $\mathbf{\Gamma}^*$ with respect to some perturbation $\vector{u}_\Sigma=\widehat{\mathbf{\Gamma}}-\mathbf{\Gamma}^*$. Whenever $\|\vector{u}_\Sigma\|_\mathrm{Fro}<\epsilon_\mu/4$:
\begin{itemize}
  \vspace{-1ex}%
  \itemsep1pt \parskip0pt \parsep0pt%
  \item The first $r+1$ eigenvalues $\mu_1'>\cdots>\mu_r'>\mu_{r+1}'$ of $\widehat{\mathbf{\Gamma}}=\mathbf{\Gamma}^*+\vector{u}_\Sigma$, counted without multiplicity, satisfy that $|\mu_i'-\mu_i|\le\|\vector{u}_\Sigma\|_\mathrm{Fro}$ for $1\le i\le r$, and $\mu_r-\mu_{r+1}'\ge\epsilon_\mu-\|\vector{u}_\Sigma\|_\mathrm{Fro}$;
  \item Denote by $\Pi_i(\mathbf{\Gamma}^*)$ the orthogonal projection onto the eigen-spaces of    $\mathbf{\Gamma}^*$ associated with $\mu_i$, and by $\Pi_i(\widehat{\mathbf{\Gamma}})$ that of $\widehat{\mathbf{\Gamma}}=\mathbf{\Gamma}^*+\vector{u}_\Sigma$ associated with $\mu_i'$, then for $1\le i\le r$,
  \[ \|\Pi_i(\widehat{\mathbf{\Gamma}})-\Pi_i(\mathbf{\Gamma}^*)\|_\mathrm{Fro}
  \le\frac{4}{\epsilon_\mu}\|\vector{u}_\Sigma\|_\mathrm{Fro}. \vspace{-1ex}\]
\end{itemize}
We have employed simplified notations above to avoid sophisticated names in operator theory. Intuitively, the first item guarantees that the eigenvalues of the perturbed matrix $\widehat{\mathbf{\Gamma}}$ are close to that of $\mathbf{\Gamma}^*$, and the second item guarantees that the eigen-spaces of $\widehat{\mathbf{\Gamma}}$ are also close to that of $\mathbf{\Gamma}^*$.

By noting that $\|\widehat{\mathbf{\Gamma}}-\mathbf{\Gamma}^*\|_\mathrm{Fro}$ was shown to have an order of $\mathcal{O}_p(n^{-1/2})$ in \eqref{eq:diff-gamma}, whereas the eigen-gap $\epsilon_\mu$ for fixed $\mathbf{\Gamma}^*$ is a constant value, we could obtain that as $n\to\infty$ for all $i$,
\begin{equation}
  \label{eq:diff-pi}%
  \|\Pi_i(\widehat{\mathbf{\Gamma}})-\Pi_i(\mathbf{\Gamma}^*)\|_\mathrm{Fro}
  = \mathcal{O}_p\left(n^{-1/2}\right).
\end{equation}

\subsubsection{Step 2.3}

Finally, we can bound $\mathcal{D}(\widehat{\mathcal{L}},\mathcal{L}^*)$. The eigenvalues of $\mathbf{\Gamma}^*$ and $\widehat{\mathbf{\Gamma}}$ were counted without multiplicity, and hence the bases of $\Pi_i(\widehat{\mathbf{\Gamma}})$ and $\Pi_i(\mathbf{\Gamma}^*)$ may not be unique. Nevertheless, let $\mathbf{E}_{\mathcal{I}^*}$ be the matrix form of a fixed orthonormal basis of $\mathcal{I}^*$, then there exists a sequence of matrices $\{\mathbf{E}_{\widehat{\mathcal{I}},1},\ldots,\mathbf{E}_{\widehat{\mathcal{I}},n},\ldots\}$ such that
\begin{itemize}
  \vspace{-1ex}%
  \itemsep1pt \parskip0pt \parsep0pt%
  \item $\mathbf{E}_{\widehat{\mathcal{I}},n}$ is the matrix form of a certain orthonormal basis of $\widehat{\mathcal{I}}$ based on a set of data samples of size $n$;
  \item The sequence converges to $\mathbf{E}_{\mathcal{I}^*}$ in $\mathcal{O}_p(n^{-1/2})$, i.e.,
  \begin{equation}
    \label{eq:diff-ei}%
    \|\mathbf{E}_{\widehat{\mathcal{I}},n}-\mathbf{E}_{\mathcal{I}^*}\|_\mathrm{Fro} = \mathcal{O}_p\left(n^{-1/2}\right), \vspace{-1ex}%
  \end{equation}
\end{itemize}
based on Eq.~\eqref{eq:diff-pi}. Denote by $\mathbf{E}_{\mathcal{L}^*}=\vector{\Sigma}^{-1/2}\mathbf{E}_{\mathcal{I}^*}$ and $\mathbf{E}_{\widehat{\mathcal{L}},n}=\widehat{\vector{\Sigma}}^{-1/2}\mathbf{E}_{\widehat{\mathcal{I}},n}$, and then
\begin{equation*}
  \mathbf{E}_{\widehat{\mathcal{L}},n}-\mathbf{E}_{\mathcal{L}^*}
  = \widehat{\vector{\Sigma}}^{-1/2}\mathbf{E}_{\widehat{\mathcal{I}},n}
  -\vector{\Sigma}^{-1/2}\mathbf{E}_{\mathcal{I}^*}
  = (\widehat{\vector{\Sigma}}^{-1/2}-\vector{\Sigma}^{-1/2})\mathbf{E}_{\widehat{\mathcal{I}},n}
  +\vector{\Sigma}^{-1/2}(\mathbf{E}_{\widehat{\mathcal{I}},n}-\mathbf{E}_{\mathcal{I}^*}).
\end{equation*}
Therefore, we can prove that
\begin{align*}
  \mathcal{D}(\widehat{\mathcal{L}},\mathcal{L}^*) &=
  \inf\nolimits_{\widehat{\mathbf{E}},\mathbf{E}^*}\|\widehat{\mathbf{E}}-\mathbf{E}^*\|_\mathrm{Fro}\\
  &\le
  \|\mathbf{E}_{\widehat{\mathcal{L}},n}-\mathbf{E}_{\mathcal{L}^*}\|_\mathrm{Fro}\\
  &\le \|\mathbf{E}_{\widehat{\mathcal{I}},n}\|_\mathrm{Fro} \cdot
  \|\widehat{\vector{\Sigma}}^{-1/2}-\vector{\Sigma}^{-1/2}\|_\mathrm{Fro}
  +\|\vector{\Sigma}^{-1/2}\|_\mathrm{Fro} \cdot
  \|\mathbf{E}_{\widehat{\mathcal{I}},n}-\mathbf{E}_{\mathcal{I}^*}\|_\mathrm{Fro}\\
  &=
  \mathcal{O}(\|\widehat{\vector{\Sigma}}^{-1/2}-\vector{\Sigma}^{-1/2}\|_\mathrm{Fro})
  +\mathcal{O}(\|\mathbf{E}_{\widehat{\mathcal{I}},n}-\mathbf{E}_{\mathcal{I}^*}\|_\mathrm{Fro})\\
  &= \mathcal{O}_p\left(n^{-1/2}\right),
\end{align*}
according to CLT and Eq.~\eqref{eq:diff-ei}. \qed

\end{document}